\documentclass[11pt]{article}

\usepackage[preprint]{acl}

\usepackage{times}
\usepackage{latexsym}
\usepackage{amsmath}

\usepackage{amsmath, amsthm, amssymb}
\newtheorem{proposition}{Proposition}
\newtheorem{remark}{Remark}

\usepackage{hyperref}
\usepackage{url}
\usepackage{booktabs}
\usepackage{amsfonts}
\usepackage{nicefrac}
\usepackage{microtype}
\usepackage{xcolor}
\usepackage{marvosym}
\usepackage{enumitem}
\usepackage{bm}
\usepackage{multirow}
\usepackage{pifont}
\usepackage{algorithm}
\usepackage{algorithmic}
\usepackage{wrapfig}
\usepackage{colortbl}
\usepackage{amsmath}
\usepackage{amssymb}
\usepackage{mathtools}
\usepackage{amsthm}
\usepackage[most]{tcolorbox}
\usepackage{graphicx}
\usepackage{hyperref,xcolor}
\definecolor{midnightgreen}{rgb}{0.0, 0.29, 0.33}
\definecolor{deepgreen}{HTML}{055c29}
\definecolor{deeppurple}{HTML}{7030a0}
\definecolor{deepblue}{HTML}{171d91}
\definecolor{brown}{HTML}{843c0c}
\definecolor{shadered}{HTML}{ffe5e5}
\definecolor{shadegreen}{HTML}{e5f7ed}
\definecolor{msftBlack}{RGB}{0,0,0}
\definecolor{lightred}{RGB}{255,163,163}
\definecolor{deepred}{RGB}{153,0,0}

\definecolor{barblue}{RGB}{90,120,180}
\definecolor{barorange}{RGB}{225,124,5}

\newcommand{\methodname}{GiGPO}

\definecolor{softblue}{RGB}{30, 90, 160}
\definecolor{softbluebg}{HTML}{e8f0fb}

\usepackage{float}
\usepackage{tikz}
\usetikzlibrary{shapes.geometric, arrows.meta, positioning, calc}

\usepackage[T1]{fontenc}

\usepackage[utf8]{inputenc}

\usepackage{microtype}

\usepackage{inconsolata}

\usepackage{graphicx}

    \title{Productive State-Space Exploration for Multi-turn RL in LLM Agents}

\author{Mohammed Haroon Dupty$^{\dagger}$ \\
  Institute of Advanced Computing \\ and Intelligence, A*STAR\\
  \texttt{haroon@u.nus.edu} \\\And
  Yong Liang Goh \\
  {Grabtaxi Holdings Pte Ltd} \\
  \texttt{gyl@u.nus.edu} \\\AND
  Song Guo \\
  The Hong Kong University of \\ Science and Technology \\
  \texttt{songguo@cse.ust.hk} \\\And
  Wee Sun Lee \\
  School of Computing, \\ National University of Singapore \\
  \texttt{leews@comp.nus.edu.sg} \\}

\begin{document}
\maketitle
{\let\thefootnote\relax\footnotetext{$^{\dagger}$Major part of this work was done while at the National University of Singapore.}}

\begin{abstract}
Reinforcement Learning (RL) for Large Language Model (LLM) agents in multi-turn, long-horizon environments remains challenging due to inefficient action selection and repetitive behavioral patterns. In particular, agents often become trapped in high-confidence yet unproductive behaviors, repeatedly executing invalid actions or entering action loops that hinder task completion. We observe that these frequently violate environment preconditions, revealing a mismatch between the agent's representation of the internal state and the dynamics of the underlying environment. In addition, repetitive high-confidence actions lead to cyclical behaviors that provide little to no progress towards the goal.
We argue that addressing these challenges requires more \emph{productive task-aligned exploration} to improve the agent's understanding of environment dynamics. Motivated by intrinsic curiosity-driven exploration, we propose a \emph{conditional state-space exploration} framework that balances task completion with intrinsic information-seeking behavior. Specifically, we introduce an intrinsic reward that penalizes actions whose observation duplicates prior state memory. This provides the benefit of reducing redundant behaviors and encouraging state-space coverage without relying on a general information-theoretic measure. Strong experimental results demonstrate that our approach substantially improves policy learning in multi-turn environments, mitigating invalid actions and repetitive loops.
\end{abstract}

\section{Introduction}

Reinforcement learning (RL) has become an important training paradigm for large language model (LLM) agents that must operate over multiple turns in partially observable, long-horizon environments~\citep{carta2023grounding,shani2024multi,guan2026evaluating}. In contrast to single-step prediction tasks, these agents must select actions sequentially, update internal memory, and adapt to changing states while receiving sparse and delayed feedback. Recent work has shown that RL is increasingly relevant for improving reasoning, agentic control, and long-horizon decision making in LLMs, but it has also revealed a persistent difficulty: agents often become stuck in repetitive, high-confidence behaviors that are locally plausible yet unproductive for the task \citep{liu2025reinforcement,zhang2025survey,xi2025agentgym}.

In such settings, effective learning depends not only on exploiting promising actions but also on exploring the environment in a way that improves the agent's understanding of its dynamics. In practice, agents often overcommit to high-confidence actions that are locally plausible but globally unhelpful, leading to invalid interactions or repetitive loops that consume budget without advancing the task. Such failures, as shown in Figure \ref{fig:intro}, typically reflect a mismatch between the agent's internal state representation and the true environment dynamics, especially when the agent lacks sufficient feedback about which actions are feasible or informative. This motivates the need for a form of \emph{productive state-space exploration} that helps the agent learn environment dynamics while remaining aligned with task completion.

\begin{figure*}[t]
\centering
  \includegraphics[width=0.49\linewidth]{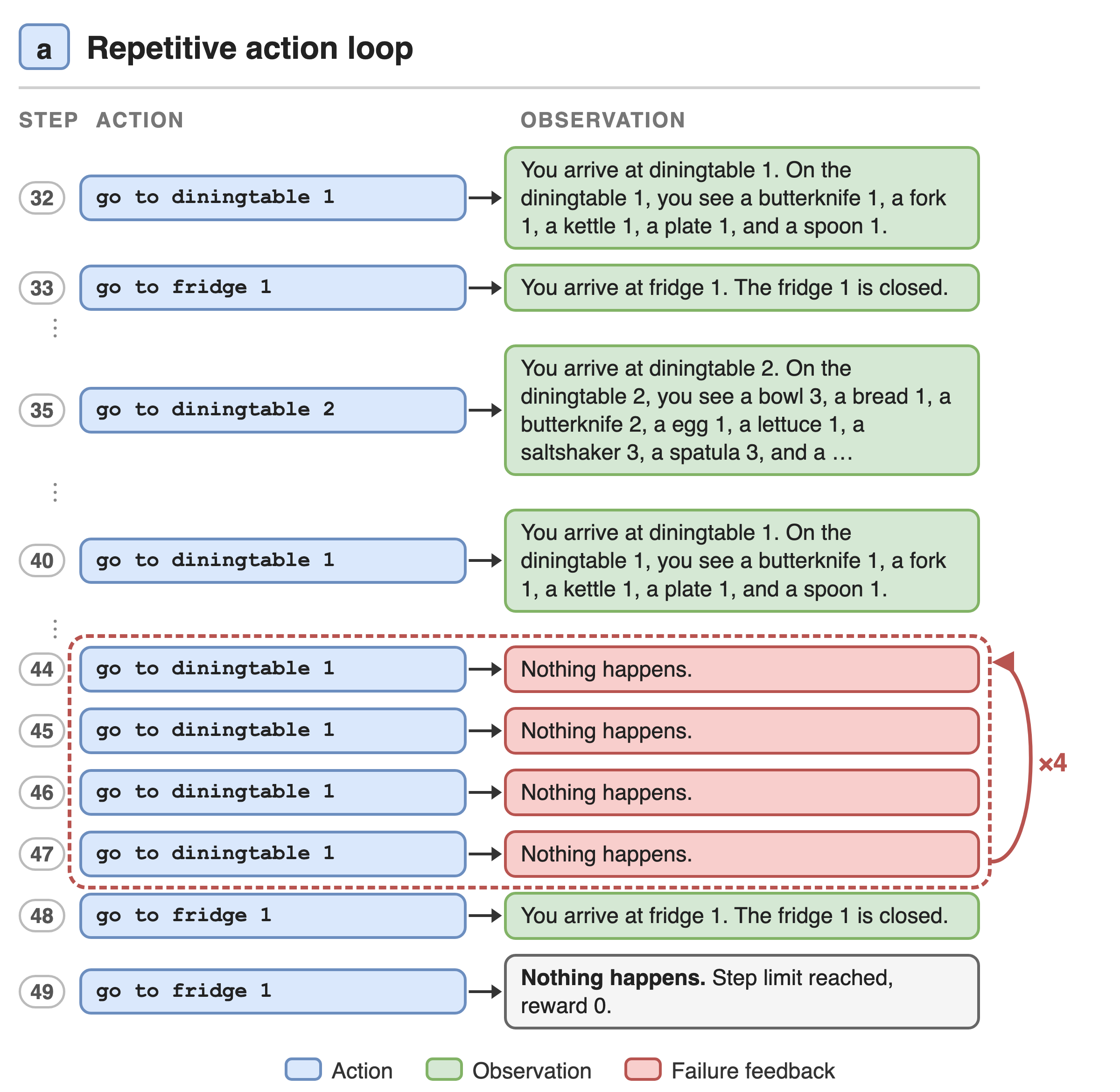} \hfill
  \includegraphics[width=0.49\linewidth]{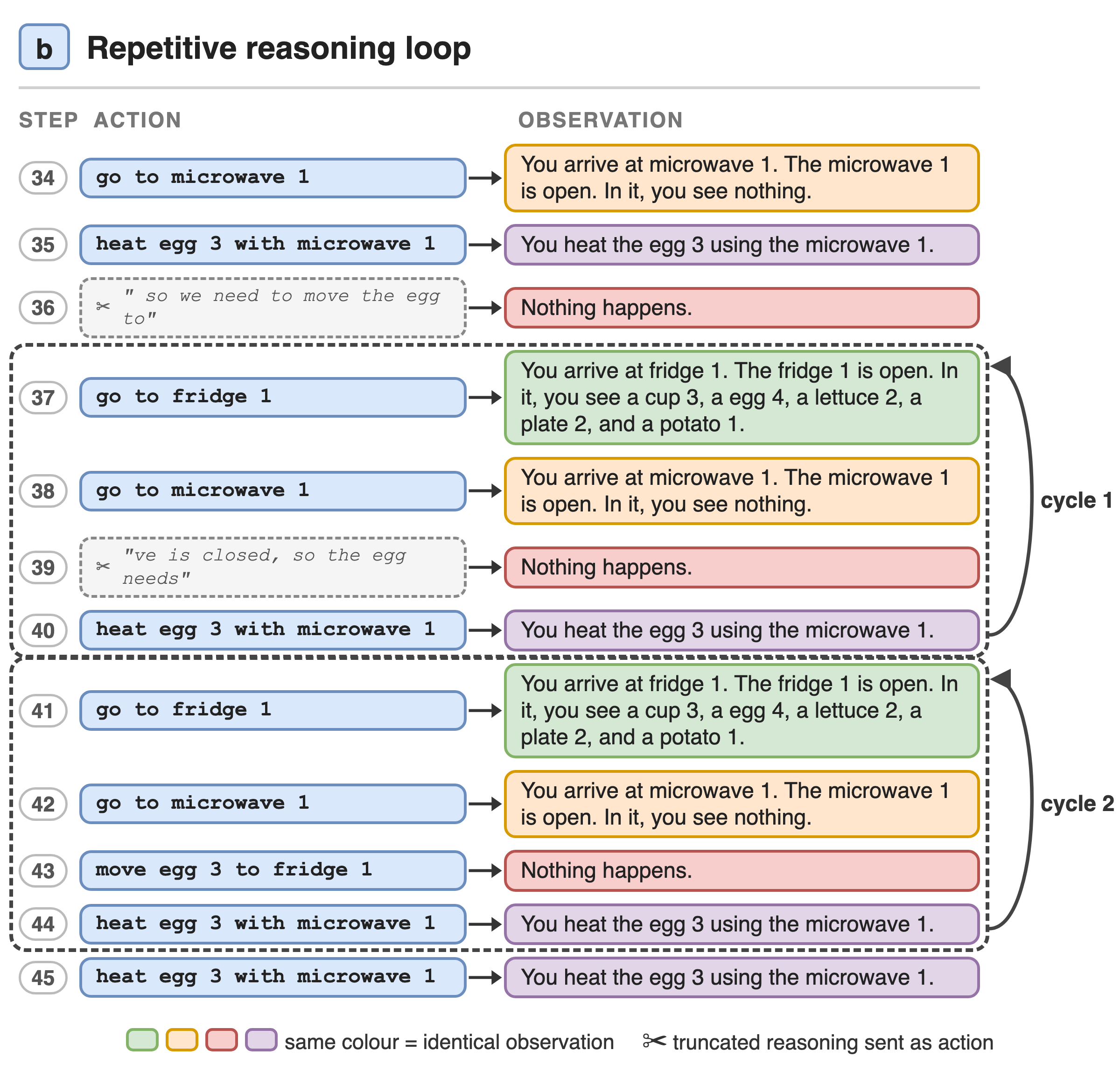}
  \caption{Failure modes of an RL-trained LLM agent in \texttt{ALFWorld}. (a)~\emph{Repetitive action loop.} The agent repeats an invalid action and receives the same failure feedback until the step budget runs out. (b)~\emph{Repetitive reasoning loop.} The agent's reasoning ignores the current observation, and the agent cycles between the fridge and the microwave without completing the task. In both cases the agent's internal state diverges from the environment dynamics, and the looping steps consume interaction budget while revealing no new observation. These are the steps our intrinsic reward penalizes.}
  \label{fig:intro}
\end{figure*}

A natural way to improve exploration is to encourage the agent to visit a broader range of states. In principle, increasing the entropy of the explored state space can improve coverage, reduce mode collapse, and expose the agent to more informative trajectories. However, directly maximizing exploration entropy can also be problematic. If encouraged too strongly, the agent may drift toward diverse but task-irrelevant behavior, thereby sacrificing progress on the underlying objective. In long-horizon environments, this tension between exploration and exploitation is especially important: \emph{exploration should broaden the agent's experience, but it should not pull the policy away from useful action sequences that lead to task completion.}

This paper pursues productive exploration through \emph{intrinsic curiosity-driven rewards}, with a conservative formulation that preserves task alignment. Rather than positively rewarding state-space entropy increase, we penalize any step whose observation has already appeared in the episode. The intuition is that unhelpful actions, such as invalid steps or redundant repetitions, consume interaction budget and policy probability without revealing anything new about the environment. They can be detected simply by checking whether each new observation adds any new information to the agent's memory.

Our formulation reframes entropy increase as a consequence of avoiding unproductive actions rather than as an objective to be optimized directly. The penalty improves within-episode state-space coverage, while the task reward keeps exploration tied to task completion. We further formalize this argument by showing that the cumulative penalty measures how many distinct observations an episode visits, and that it bounds the episode's visitation entropy (Section~\ref{sec:method}). A single-step analysis further shows that novel observations always increase entropy, while revisits of frequent observations decrease it.

Our approach is consistent with the broader literature on intrinsic motivation in reinforcement learning, where curiosity, novelty seeking, and uncertainty reduction have been used to improve learning under sparse rewards \cite{chentanez2004intrinsically,aubret2019survey, gao2025navigate}. It also aligns with recent efforts to train long-horizon LLM agents, where robustness depends not only on reasoning ability but also on the capacity to avoid repetitive or uninformative interactions with the environment \cite{kalyan2025reinforcement}. Within this setting, penalizing unhelpful actions provides a simple compromise between unrestricted entropy maximization and overly conservative exploitation. The resulting behavior promotes broader state coverage without losing contact with the task. 

In summary, we propose a \emph{conditional intrinsic reward} framework that supports \emph{productive exploration} by penalizing actions that yield no new informative observation. Our key argument is that entropy expansion is useful, but only when it is induced indirectly through the suppression of redundant or uninformative behavior.
This allows the agent to explore more broadly while maintaining a strong coupling to task progress.
We evaluate our proposed exploration strategy on two prominent multi-turn benchmark environments: \texttt{ALFWorld} and \texttt{WebShop}. Empirical results show that our approach surpasses recent baselines in performance metrics of success rates and significantly mitigates both invalid and repetitive action rates.

\section{Related Work}

\subsection{Prompting-Based LLM Agents}

Early LLM agent frameworks rely on prompting for multi-step reasoning and action selection without updating model weights. ReAct \cite{yao2022react} interleaves reasoning (Thought) and actions (Act), outperforming pure reasoning or action prompting on tasks such as ALFWorld and WebShop. Reflexion \cite{shinn2023reflexion} extends this with verbal reinforcement, storing self-reflections from failed episodes in episodic memory to guide future decisions. Although ReAct and Reflexion improve performance on multi-turn tasks, both methods depend heavily on self-evaluation quality, which degrades for smaller LLMs, and cannot structurally suppress invalid actions since prompting alone cannot alter action priors.

\subsection{Reinforcement Learning for LLM Agents}
Recently, reinforcement learning (RL) has emerged as a principled approach for post-training LLMs, with growing focus on agent-based LLMs in multi-turn environments. PPO \cite{schulman2017proximal} and its critic-free variant GRPO \cite{shao2024deepseekmath}, which estimates baselines from group rollout scores, are widely adopted for RL-based LLM training. RLOO \cite{kool2019buy, ahmadian2024back} further reduces variance without a learned critic, and AGILE \cite{feng2024agile} applies PPO to LLM agents using a four-module architecture with proactive expert consultation, significantly outperforming prompting on long-horizon QA tasks. Retrospex \cite{xiang2024retrospex} combines an offline RL critic with LLM action likelihood through dynamic rescoring, improving performance on ALFWorld and WebShop over prompting baselines.

GiGPO~\cite{feng2026group} captures global trajectory quality and group-level effectiveness without additional rollouts, providing rich signals for LLM agent learning. Recently, ProxMO \cite{fang2026proximity} presents a framework to address GRPO's context-agnostic normalization and GiGPO's discrete state-grouping limitation. They explore the provision of gradient intensities based on task difficulty, and replace hard boundary partitioning with continuous semantic similarity weighting, so as to provide richer reward signals that are less task-agnostic. In totality, these works establish multiple avenues to improve vanilla RL for multi-turn LLM agents, such as stable baselines and supporting tools to provide more information about the world. However, they do not directly address the exploration failure modes we identify in this work. Specifically, we explore the notion of invalid actions that arise from environment precondition mismatches, resulting in action loops driven by high-confidence but unproductive behaviors. Our approach adds a conditional intrinsic reward that explicitly penalizes such behaviors during RL training.

Recent RL methods for multi-turn LLM agents focus on improving credit assignment, exploration, and training stability. ArCHer \cite{zhou2024archer} introduces a hierarchical RL framework combining off-policy temporal difference learning at the utterance level with on-policy policy gradients at the token level, substantially improving sample efficiency over PPO while emphasizing information-seeking behavior in multi-turn settings. Tree-GRPO \cite{ji2025treesearch} improves training efficiency by replacing chain-based rollouts with tree-structured sampling, enabling step-level supervision from outcome rewards alone. Addressing exploration instability, EPO \cite{xu2025epo} identifies \emph{exploration-exploitation cascade failure} in shared multi-turn policies and mitigates it via trajectory-level entropy regularization and entropy smoothing, yielding gains on ScienceWorld and ALFWorld. Similarly, RAGEN \cite{wang2025ragen} studies multi-turn RL failure modes, identifying an ``Echo Trap'' caused by reward variance cliffs and gradient spikes, and proposes trajectory filtering with a critic for stabilization. While these works improve \emph{how} agents learn---through efficient credit propagation, entropy stabilization, or robust optimization---our approach is complementary, focusing on \emph{what} to explore by penalizing invalid or repetitive actions that fail to expand state-space knowledge, without itself performing credit assignment across turns.

\subsection{Intrinsic Motivation and Exploration in RL}

Intrinsic motivation has long been used to improve exploration under sparse rewards by encouraging state diversity and novelty. Curiosity-based methods like~\citet{pathak2017curiosity,burda2018large} reward novel states or high prediction errors, while count-based exploration formalizes revisitation penalties through information gain via KL-divergence between prior and posterior state beliefs~\citep{bellemare2016unifying}. Extending state-entropy exploration to task-reward settings,~\citet{kim2023accelerating} propose value-conditional state entropy (VCSE), addressing the tendency of naive entropy maximization to overexplore low-value regions by maximizing entropy within value partitions.~\citet{mutti2021task,ashlag2026state} further provide a theoretical foundation for state entropy regularization, showing that it improves robustness and promotes broader coverage across multiple viable trajectories, unlike policy entropy which mainly injects noise along a dominant path. These works motivate our approach -- by targeting state-space entropy, our conditional intrinsic penalty discourages revisiting already-observed states.
Additionally, by tying the penalty to information gain rather than raw visitation counts, we ensure that only transitions that fail to expand the agent's knowledge are penalized, preserving the pull of the task reward toward goal-relevant states.

In LLM-based RL, intrinsic signals have recently been explored to complement sparse task rewards. MIRA \cite{nourzad2026mira} improves exploration by using memory graphs from agent trajectories and semantic reasoning to shape PPO advantages, mitigating poor early critic calibration. Similarly, our approach leverages accumulated experience for exploration, but instead defines an information-gain-based intrinsic signal that penalizes observations failing to expand state knowledge. This is simpler to compute, and is motivated by Shannon entropy (Proposition~\ref{proposition_coverage}), while being aligned with repetitive and invalid behaviors in text-based environments.
Further recent work on moral intrinsic rewards \cite{tennant2025moral} has shown that intrinsic signals can shape agent behavior toward desired norms without task-specific reward engineering, reinforcing the broader applicability of intrinsic motivation for LLM alignment. Our contribution fits within this emerging direction: intrinsic rewards as a practical mechanism for shaping LLM agent behavior in ways that extrinsic rewards alone cannot achieve.

\section{Methodology}
\label{sec:method}
\subsection{Problem Setup}
We consider reinforcement learning for LLM agents operating in partially observable, long-horizon interactive environments. At each timestep \(t\), the agent is in state \(s_t\), takes an action \(a_t \in \mathcal{A}\), and receives an observation \(o_t\) from the environment. We define the agent's state \(s_t\) as the set of all observations received so far along the trajectory, i.e., the information the agent has accumulated about the environment up to time \(t\). The objective is to learn a policy \(\pi(a_t \mid s_t)\) that maximizes expected cumulative reward while completing the task within a finite horizon \(T\).

Unlike standard short-horizon decision problems, this setting introduces two recurring failure modes. First, the agent may produce \emph{invalid actions} whose preconditions are not satisfied, causing the transition to fail or yielding no meaningful progress. Second, the agent may enter \emph{action loops}, repeatedly selecting a small set of actions that do not improve task state or move the agent closer to completion. Both issues reduce learning efficiency and often reflect an imperfect internal model of the environment dynamics. In practice, these failures are especially harmful in LLM agents because they waste interaction budget and reinforce unproductive behavior patterns.

\subsection{State Representation}
To support long-horizon decision making, we define the agent's state as the accumulated observation history:
\[
s_t = \{o_1, o_2, \dots, o_t\}.
\]
This representation captures the information available to the agent at time \(t\). Instead of relying only on the most recent observation, the agent maintains a memory of the full trajectory of \emph{unique observations}, enabling it to reason about what has already been explored and what remains uncertain. This accumulated state serves as the basis for evaluating whether a new observation provides additional information about the environment.

At each step, the agent observes the current state \(s_t\), selects an action \(a_t\), and receives the next observation \(o_{t+1}\). The state is then updated by appending the new observation:
\[
s_{t+1} = s_t \cup \{o_{t+1}\}.
\]
The intrinsic reward is computed from the change in information content between \(s_t\) and \(o_{t+1}\). If the new observation does not improve the agent's knowledge of the environment, a negative intrinsic reward is assigned. This signal is combined with the task reward and used to update the policy.

\subsection{Conditional State-Space Exploration}
We propose a conditional state-space exploration mechanism that penalizes unhelpful transitions. After taking action \(a_t\), the agent receives a new observation \(o_{t+1}\), and the information gain is computed by comparing this new observation with the previous state \(s_t\). Let \(I(\cdot)\) denote a measure of information content or entropy associated with the agent's accumulated state representation. We define the change in information as
\[
\Delta I(s_t, o_{t+1}) = I(s_t \cup \{o_{t+1}\}) - I(s_t).
\]
This quantity measures whether the new observation expands the agent's knowledge of the environment.

The intrinsic reward is then defined as:
\begin{equation}
r_t^{\text{intrinsic}} =
\begin{cases}
-\gamma & \text{if } \Delta I(s_t, o_{t+1}) \leq 0, \\
0 & \text{otherwise},
\end{cases}
\label{eq:main}
\end{equation}

where \(\gamma > 0\) controls the penalty magnitude. This formulation penalizes actions that do not lead to a meaningful increase in the agent's information about the environment, thereby discouraging redundant or cyclic behavior. 

The formulation is general, and $I(\cdot)$ can be any measure of the information in the agent's memory. In practice, we use a simple instantiation, $I(s)=\log|s|$, with new observations compared by string matching. Then $\Delta I(s_t,o_{t+1})\le 0$ iff $o_{t+1}\in s_t$, and Equation~\ref{eq:main} reduces to a duplicate-observation penalty (Algorithm~\ref{alg:reward}). Exact matching suffices in many settings, including \texttt{ALFWorld}, where environment feedback is drawn from a fixed set of templated observations, so identical states produce identical text. It errs only toward under-penalization, since it may miss near-duplicates but never penalizes a genuinely new observation. Unlike similarity-based measures, it also requires no threshold and adds no hyperparameters beyond $\gamma$. Moreover, since the agent controls actions rather than observation text, it can avoid the penalty only by reaching a different state. More sophisticated measures, such as embedding similarity, are needed only in environments with free-form or noisy observations, where the same state can be described in different ways.

\begin{algorithm}[t]
\caption{Computation of the intrinsic reward}
\label{alg:reward}
\begin{algorithmic}[1]

\REQUIRE state memory $s_t=\{o_1,\dots,o_t\}$, new observation $o_{t+1}$, penalty weight $\gamma$
\IF{$o_{t+1} \in s_t$ }
    \STATE $r_t^{\text{intrinsic}} \gets -\gamma$
    \STATE $s_{t+1} \gets s_t$
\ELSE
    \STATE $r_t^{\text{intrinsic}} \gets 0$
    \STATE $s_{t+1} \gets s_t \cup \{o_{t+1}\}$
\ENDIF
\STATE \textbf{return} $r_t^{\text{intrinsic}}$, $s_{t+1}$
\end{algorithmic}
\end{algorithm}

\paragraph{How Does the Penalty Teach Preconditions?}
Let $\mathcal{F}(s)$ be the set of actions whose preconditions fail in state $s$. We assume that (A1) every $a \in \mathcal{F}(s)$ returns the same observation $o_\bot$, independent of $a$ and $s$, and (A2) a failed action leaves the state unchanged. Under A1, every precondition violation after the first in an episode yields $o_{t+1} = o_\bot \in s_t$ and is penalized, whichever action failed. The policy is thus pushed away from all precondition-violating actions, not only from repeating one. Under A2, avoiding them loses no information, so the penalty does not conflict with information gathering.

\subsection{Final Reward}
The final reward used for policy optimization combines the task reward and the intrinsic reward:
\[
r_t = r_t^{\text{task}} + \lambda r_t^{\text{intrinsic}},
\]
where \(\lambda \geq 0\) balances task performance and exploration pressure. The resulting objective is
\[
J(\pi) = \mathbb{E}_{\pi}\left[\sum_{t=0}^{T-1} \rho^t r_t\right],
\]
where \(\rho \in [0,1]\) is the discount factor. This objective encourages the agent to maximize task reward while avoiding actions that fail to improve state-space coverage.

\subsection{Entropy-Oriented Exploration Objective}
The motivation for this design is that directly maximizing state-space entropy can sometimes distract the agent from the task objective. An overly strong incentive for novelty may lead the policy to explore diverse but irrelevant trajectories. By contrast, penalizing non-informative actions creates a task-constrained pressure toward visiting more distinct observations within an episode. The policy is discouraged from remaining in low-diversity action cycles, but it is still constrained by the external task reward, which ensures that exploration remains useful for task completion.

This conditional structure is especially important in long-horizon environments, where the agent must balance exploration and exploitation over many sequential decisions. Actions that expand state coverage or contribute to task progress receive \emph{no} intermediate positive incentive. However, actions that actively fail to improve the agent's state knowledge are \emph{penalized}. As a result, the learned policy is pushed toward trajectories that are both diverse and productive.

This penalization mechanism encourages productive exploration by discouraging repeated observations and promoting broader coverage of the agent's experience. It is related to information-theoretic exploration, which uses visitation entropy to characterize exploration diversity. Below, we formalize this connection through single-step changes in visitation entropy and episode-level coverage.

Let $P_t$ denote the empirical observation-visitation distribution
over $o_1,\dots,o_t$, with $p_i^{(t)}=c_i^{(t)}/N$, where
$c_i^{(t)}$ counts occurrences of the $i$-th distinct observation,
$n=|s_t|$, and $N=\sum_{i=1}^{n}c_i^{(t)}=t\ge1$.
Its Shannon entropy is
$H(P_t)=-\sum_{i=1}^{n}p_i^{(t)}\log p_i^{(t)}$.
Unlike the memory-based measure $I(s_t)=\log|s_t|$,
$H(P_t)$ accounts for observation frequencies.

\begin{proposition}
\label{proposition_1}
For the distribution $P_t$ defined above, let $n=n_t$ be the
number of distinct states observed by time $t$. States are identified
with observations in the memory $s_t$. Then,
\begin{enumerate}
\item If the new observation introduces a previously unseen
state, then $H(P_{t+1})>H(P_t)$.

\item If the new observation repeats an already-visited state
$k$ satisfying $p_k^{(t)}\ge e^{-H(P_t)}$, then
$H(P_{t+1})\le H(P_t)$.
\end{enumerate}
\end{proposition}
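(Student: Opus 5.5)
The plan is to work directly with the empirical distributions and write $P_{t+1}$ as a mixture of $P_t$ and a point mass. Set $q = 1/(N+1)$. In both cases the new count vector is $c^{(t)}$ with one extra count, so
\[
P_{t+1} = (1-q)\,\tilde P_t + q\,\delta_{o_{t+1}},
\]
where $\tilde P_t$ is $P_t$, padded with a zero entry if $o_{t+1}$ is new. The two parts then reduce to two standard facts about Shannon entropy: the grouping rule for disjoint supports, and concavity.

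\emph{Part 1 (new state).} Here $\delta_{o_{t+1}}$ and $\tilde P_t$ have disjoint supports. The grouping (chain) rule gives
\[
H(P_{t+1}) = h(q) + (1-q) H(P_t),
\]
where $h$ is the binary entropy. So $H(P_{t+1}) - H(P_t) = h(q) - qH(P_t)$, and it suffices to show $h(q) > qH(P_t)$. I will bound $H(P_t) \le \log n \le \log N$, since the support size is at most $N = t$. I will then expand
\[
h(q) = q\log(N+1) + (1-q)\log\tfrac{N+1}{N}.
\]
This is strictly larger than $q\log N$, because $\log(N+1) > \log N$ and the second term is nonnegative. This also covers the edge case $N = 1$, where $H(P_t) = 0$.

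\emph{Part 2 (repeat of state $k$).} Now $P_{t+1} = (1-q)P_t + q\,\delta_k$ with $k$ in the support of $P_t$. I restrict attention to the face of the simplex spanned by the $n$ observed states. On that face every $p_i^{(t)} > 0$, so $H$ is differentiable at $P_t$ with
\[
\partial H/\partial p_i = -\log p_i - 1.
\]
Because $H$ is concave on this face, it lies below its tangent plane:
\[
H(P_{t+1}) \le H(P_t) + q\,\langle \nabla H(P_t),\, \delta_k - P_t\rangle .
\]
Computing the inner product, the constants $-1$ cancel since both $\delta_k$ and $P_t$ sum to one. What remains is
\[
\langle \nabla H(P_t),\, \delta_k - P_t\rangle = -\log p_k^{(t)} + \sum_i p_i^{(t)}\log p_i^{(t)} = -\log p_k^{(t)} - H(P_t).
\]
The hypothesis $p_k^{(t)} \ge e^{-H(P_t)}$ is exactly $-\log p_k^{(t)} \le H(P_t)$. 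So the directional term is $\le 0$, giving $H(P_{t+1}) \le H(P_t)$.

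The main point requiring care is the tangent-plane step in Part 2. One must make sure the gradient is taken on the relative interior of the face containing both $P_t$ and $\delta_k$, so that the derivative is finite and concavity applies. This is why the argument uses the fact that $k$ is already visited. In Part 1 the corresponding derivative would be $+\infty$, which is consistent with entropy always increasing there. If that step looks delicate, a fallback is to use the explicit count form
\[
H(P_t) = \log N - \tfrac{1}{N}\sum_i c_i \log c_i .
\]
One then checks the sign of the difference directly, treating the increment $c_k \to c_k + 1$ via the convexity of $x\log x$. This route is messier, so I will use concavity.
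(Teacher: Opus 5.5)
Your proposal is correct and takes essentially the paper's route: the mixture $P_{t+1}=(1-q)\tilde P_t+q\,\delta_{o_{t+1}}$, then the grouping identity $H(P_{t+1})=h(q)+(1-q)H(P_t)$ with $H(P_t)\le\log N$ for a new state, and concavity with first-order term $-\log p_k^{(t)}-H(P_t)\le 0$ for a repeat. The only difference is cosmetic: you apply the tangent-plane inequality on the support face, whereas the paper uses the right derivative at $0$ of $f(\lambda)=H((1-\lambda)P_t+\lambda e_k)$ and also gets strict inequality unless $P_t=e_k$, which the statement does not require.
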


\begin{proof}
Let $\alpha=1/(N+1)$, where $N=t$ is the total count of observations.
The update induced by a new observation can be written as
\[
P_{t+1}
=
(1-\alpha)P_t+\alpha e,
\]
where $e$ is a point mass on the observed state.

\medskip
\noindent\textbf{Case 1: Introduction of a new state.}

Suppose the observation corresponds to a previously unseen state.
Then the support expands by one dimension and
\[
P_{t+1}
=
\bigl((1-\alpha)P_t,\alpha\bigr).
\]
The entropy of the updated distribution decomposes as
\[
H(P_{t+1})
=
(1-\alpha)H(P_t)+h(\alpha),
\]
where $h(\alpha) = -(1-\alpha)\log(1-\alpha)
-\alpha\log\alpha$ is the binary entropy term arising from introducing the new state. Therefore,
\[
H(P_{t+1})-H(P_t)
=
h(\alpha)-\alpha H(P_t).
\]
Since the support size before the update is at most $N$,
\[
H(P_t)\le \log N.
\]
Substituting $\alpha=1/(N+1)$ gives
\begin{align*}
h(\alpha)-\alpha H(P_t)
&\ge h(\alpha)-\alpha\log N\\
&=\alpha\log\frac{N+1}{N}
 -(1-\alpha)\log(1-\alpha)\\
&>0
\end{align*}
Hence $H(P_{t+1})>H(P_t)$.

\medskip
\noindent\textbf{Case 2: Repetition of an existing state.}

Suppose the observation revisits an already-seen state
$k\in\{1,\dots,n\}$. Then
\[
P_{t+1}
=
(1-\alpha)P_t+\alpha e_k,
\]
where $e_k$ is the point mass on state $k$. For $\lambda\in[0,1]$, define
\[
f(\lambda)
=
H\!\left((1-\lambda)P_t+\lambda e_k\right).
\]
If $P_t=e_k$, then $f(\lambda)=0$ for all
$\lambda\in[0,1]$. Otherwise, since Shannon entropy is strictly
concave on the simplex and $P_t\neq e_k$, $f$ is strictly
concave in $\lambda$.
Its right derivative at $\lambda=0$ is
\[
f'(0)
=
-H(P_t)-\log p_k^{(t)}.
\]
Thus,
\[
f'(0)\le0
\quad\Longleftrightarrow\quad
p_k^{(t)}
\ge
e^{-H(P_t)}.
\]
By concavity, a non-positive initial slope implies
\[
f(\lambda)\le f(0)
\]
for all $\lambda\in(0,1]$. If $P_t\neq e_k$, strict concavity gives
\[
f(\lambda)<f(0)
\qquad\text{for all }\lambda\in(0,1].
\]
Taking $\lambda=\alpha$ gives
\[
H(P_{t+1})
\le
H(P_t),
\]
with strict inequality unless $P_t=e_k$.
\end{proof}

\begin{remark}
Proposition~\ref{proposition_1} formalizes the intuition underlying our intrinsic
penalty at the level of a single visitation update. Introducing novel states strictly increases visitation
entropy, whereas repeatedly revisiting sufficiently frequent
states reduces entropy and concentrates probability mass.
Penalizing repeated observations therefore discourages
unproductive revisitation and is directionally aligned with
broader intrinsic exploration.
\end{remark}

\begin{proposition}[Whole-episode coverage and entropy]
\label{proposition_coverage}
Consider an episode with observations $o_1,\dots,o_L$ and
$D=\left|\{1\le t<L : o_{t+1}\in s_t\}\right|$ duplicate steps.
Then $|s_L|=L-D$, and the undiscounted cumulative intrinsic reward is
\[
R^{\text{int}}
=\sum_{t=1}^{L-1}r_t^{\text{intrinsic}}
=-\gamma D
=-\gamma(L-|s_L|).
\]
Thus, at fixed episode length $L$, maximizing $R^{\text{int}}$
maximizes within-episode coverage $|s_L|$. Moreover,
\[
H(P_L)\le \log(L-D),
\]
\[
H(P_L)\ge
\log L-\frac{D+1}{L}\log(D+1).
\]
The lower bound is attained, both bounds are decreasing in $D$,
and both equal $\log L$ if and only if $D=0$, equivalently
$R^{\text{int}}=0$.
\end{proposition}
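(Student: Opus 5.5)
The plan is to prove the three claims in order: the counting identity, the upper bound, and then the lower bound with its monotonicity and equality properties.

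\textbf{Counting.} Algorithm~\ref{alg:reward} starts from $s_1=\{o_1\}$, so $|s_1|=1$. For each $t\in\{1,\dots,L-1\}$, the memory grows by exactly one element when $o_{t+1}\notin s_t$ and is unchanged otherwise. Of these $L-1$ transitions, $D$ are duplicates, so
\[
|s_L| = 1 + (L-1-D) = L-D.
\]
Each duplicate step contributes $-\gamma$ and every other step contributes $0$. Hence $R^{\text{int}}=-\gamma D=-\gamma(L-|s_L|)$. With $L$ fixed, maximizing $R^{\text{int}}$ is therefore the same as maximizing $|s_L|$.

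\textbf{Upper bound.} The empirical distribution $P_L$ is supported on exactly $n=L-D$ distinct observations. Shannon entropy is at most the logarithm of the support size, which gives $H(P_L)\le\log(L-D)$. This bound is clearly decreasing in $D$, and it equals $\log L$ exactly when $D=0$.

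\textbf{Lower bound (the main step).} The plan is to minimize $H$ over all count vectors $(c_1,\dots,c_n)$ of positive integers with $\sum_i c_i=L$ and $n=L-D$.
\begin{itemize}
\item The key claim is that every such vector is majorized by the extremal vector $c^\star=(D+1,1,\dots,1)$. This holds because each $c_i\ge1$ forces the largest $m$ entries to sum to at most $L-(n-m)$, which is exactly the corresponding partial sum of $c^\star$.
\item Entropy is Schur-concave, so $H(P_L)\ge H(c^\star/L)$, with equality attained by $c^\star$ itself.
\item A direct computation gives
\[
H(c^\star/L)=-\tfrac{D+1}{L}\log\tfrac{D+1}{L}+\tfrac{L-D-1}{L}\log L=\log L-\tfrac{D+1}{L}\log(D+1).
\]
\end{itemize}
If one prefers to avoid majorization, the fallback is a smoothing argument. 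Take two counts $c_i\ge c_j\ge2$ and replace them by $(c_i+1,c_j-1)$; this does not increase entropy, because $x\log x$ is convex. Repeating the move terminates at $c^\star$.

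\textbf{Monotonicity and equality.} The lower bound is decreasing in $D$ because $(D+1)\log(D+1)$ is increasing in $D$. At $D=0$ it equals $\log L-\tfrac1L\log1=\log L$. For $D\ge1$ it is strictly below $\log L$, since then $(D+1)\log(D+1)>0$. Combining this with the upper-bound case gives the claimed characterization: both bounds equal $\log L$ if and only if $D=0$, which is equivalent to $R^{\text{int}}=0$.

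The only step requiring real care is the majorization claim.
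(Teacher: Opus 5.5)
Your proposal is correct and follows the paper's proof essentially step for step: the same counting identity $|s_L|=1+(L-1-D)$, the support-size upper bound, the extremal count vector $(D+1,1,\dots,1)$ for the lower bound, and the same monotonicity argument via $(D+1)\log(D+1)$. The paper proves extremality with exactly your fallback smoothing move, justified by convexity of $x\log x$, so your majorization and Schur-concavity framing repackages that same step, and your check of the partial-sum condition $L-(n-m)$ is sound.
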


{\begin{proof}
\textit{Coverage.}
$o_1$ is always novel, and each subsequent observation either
adds one element to the memory or is a duplicate. Therefore,
\[
|s_L|=1+(L-1-D)=L-D.
\]
By the definition of the intrinsic reward,
\[
R^{\text{int}}=-\gamma D=-\gamma(L-|s_L|).
\]

\textit{Upper bound.}
Entropy is at most the logarithm of the support size:
\[
H(P_L)\le\log|s_L|=\log(L-D).
\]

\textit{Lower bound.}
For positive integer counts $c_1,\dots,c_{L-D}$ summing to $L$,
\[
H(P_L)=\log L-\frac{1}{L}\sum_{i=1}^{L-D}c_i\log c_i.
\]
The sum $\sum_i c_i\log c_i$ is maximized when the counts are
$(D+1,1,\dots,1)$. Indeed, whenever two counts satisfy $a\ge b>1$,
moving one unit from $b$ to $a$ does not decrease the sum, since
$x\log x$ is convex. Repeating this operation leaves all but one
count equal to $1$. Therefore,
\begin{align*}
H(P_L)
&\ge
H\left(\frac{D+1}{L},\frac1L,\dots,\frac1L\right)\\
&=
\log L-\frac{D+1}{L}\log(D+1),
\end{align*}
with equality when the count vector is $(D+1,1,\dots,1)$,
up to permutation.

\textit{Monotonicity.}
The upper bound $\log(L-D)$ is strictly decreasing in $D$.
The lower bound is also strictly decreasing, since
$(D+1)\log(D+1)$ is strictly increasing for $D\ge0$.
Both bounds equal $\log L$ when $D=0$ and are strictly below
$\log L$ when $D\ge1$. Since $\gamma>0$,
$D=0$ if and only if $R^{\text{int}}=0$.
\end{proof}}

\begin{remark}
Proposition~\ref{proposition_coverage} establishes that, for a fixed episode length, maximizing the
cumulative intrinsic reward is equivalent to maximizing within-episode
coverage. It also bounds the visitation entropy in terms of the number
of duplicate steps, showing that fewer duplicates permit higher entropy.
However, we do note that maximizing coverage does not necessarily maximize entropy across trajectories, since entropy also depends on the frequency distribution of visited observations.
\end{remark}

\section{Experiments}
\label{sec:experiments}
\begin{table*}[t]
\centering
\caption{Performance on \texttt{ALFWorld} and \texttt{WebShop}. Following GiGPO, for \texttt{ALFWorld} environment we report the average success rate (\%) for each subtask along with the overall result. For \texttt{WebShop} environment, we report the average score and the average success rate (\%). Prompting/PPO/RLOO/GRPO/GiGPO rows are quoted from \citet{feng2026group}; ProxMO from \citet{fang2026proximity}.}
\label{tab:main}
\resizebox{\textwidth}{!}{
\begin{tabular}{llccccccc|cc}
\toprule
\multirow{2}{*}{Type} & \multirow{2}{*}{Method} & \multicolumn{7}{c|}{\textbf{ALFWorld}} & \multicolumn{2}{c}{\textbf{WebShop}} \\
 & & Pick & Look & Clean & Heat & Cool & Pick2 & All & Score & Succ.\\
\midrule
\multicolumn{10}{l}{\textit{Closed-Source Model}} \\
Prompting& GPT-4o & 75.3 & 60.8 & 31.2 & 56.7 & 21.6 & 49.8 & 48.0& 31.8 & 23.7\\
Prompting& Gemini-2.5-Pro & 92.8 & 63.3 & 62.1 & 69.0 & 26.6 & 58.7 & 60.3& 42.5 & 35.9\\
\midrule
\multicolumn{10}{l}{\textit{Qwen2.5-1.5B-Instruct}} \\
Prompting& Qwen2.5 & 5.9 & 5.5 & 3.3 & 9.7 & 4.2 & 0.0 & 4.1 & 23.1 & 5.2\\
Prompting& ReAct & 17.4 & 20.5 & 15.7 & 6.2 & 7.7 & 2.0 & 12.8& 40.1& 11.3\\
Prompting& Reflexion & 35.3 & 22.2 & 21.7 & 13.6 & 19.4 & 3.7 & 21.8 & 55.8& 21.9\\
RL Training& PPO (with critic) & 64.8\textsubscript{(3.5)} & 40.5\textsubscript{(6.9)} & 57.1\textsubscript{(4.9)} & 60.6\textsubscript{(6.6)} & 46.4\textsubscript{(4.0)} & 47.4\textsubscript{(1.9)} & 54.4\textsubscript{(3.1)}& 73.8\textsubscript{(3.0)} & 51.5\textsubscript{(2.9)} \\
RL Training& RLOO & 88.3\textsubscript{(3.0)} & 52.8\textsubscript{(8.6)} & 71.0\textsubscript{(5.9)} & 62.8\textsubscript{(8.7)} & 66.4\textsubscript{(5.5)} & 56.9\textsubscript{(4.7)} & 69.7\textsubscript{(2.5)}& 73.9\textsubscript{(5.6)}& 52.1\textsubscript{(6.7)}\\
RL Training& GRPO & 85.3\textsubscript{(1.5)} & 53.7\textsubscript{(8.0)} & 84.5\textsubscript{(6.8)} & 78.2\textsubscript{(7.9)} & 59.7\textsubscript{(5.0)} & 53.5\textsubscript{(5.6)} & 72.8\textsubscript{(3.6)}& 75.8\textsubscript{(3.5)} & 56.8\textsubscript{(3.8)}\\

RL Training& {\methodname{}} & 94.4\textsubscript{(5.9)} & 67.5\textsubscript{(4.6)} & 94.8\textsubscript{(3.8)} & 94.4\textsubscript{(7.8)} & 79.8\textsubscript{(4.7)} & 76.4\textsubscript{(5.4)} & 86.7\textsubscript{(1.7)} & {83.1}\textsubscript{(1.6)}& {65.0}\textsubscript{(3.2)}\\
RL Training& {ProxMO} & 94.3\textsubscript{(--)} & 92.9\textsubscript{(--)} & 89.3\textsubscript{(--)} & 92.2\textsubscript{(--)} & \textbf{89.5}\textsubscript{(--)} & 87.0\textsubscript{(--)} & 90.6\textsubscript{(--)} & {85.3}\textsubscript{(--)}& {67.1}\textsubscript{(--)}\\
\rowcolor{gray!15}RL Training& \textbf{OURS (PPO)} & \textbf{99.5}\textsubscript{(0.7)} & \textbf{95.5}\textsubscript{(6.4)} & \textbf{99.7}\textsubscript{(0.4)} & \textbf{97.1}\textsubscript{(2.9)} & {87.4}\textsubscript{(1.6)} & \textbf{83.9}\textsubscript{(3.7)} & \textbf{94.0}\textsubscript{(1.2)} & \textbf{89.2}\textsubscript{(1.1)} & \textbf{78.3}\textsubscript{(1.2)}\\
\bottomrule
\end{tabular} 
}
\end{table*}

\begin{table*}[t]
\centering
\caption{Exploration efficiency on \texttt{ALFWorld} and \texttt{WebShop}. We compare our method with GiGPO using three metrics: mean episode length, invalid action rate, and repetitive action rate. Both GiGPO and our method are run by us under the same setup. Lower values indicate more productive exploration.}
\label{tab:exploration}
\resizebox{\textwidth}{!}{
\begin{tabular}{lccc|ccc}
\toprule
\multirow{2}{*}{Method} & \multicolumn{3}{c|}{\textbf{ALFWorld}} & \multicolumn{3}{c}{\textbf{WebShop}} \\
 & Mean Episode Length &  Invalid Action Rate & Repetitive Action Rate & Mean Episode Length & Invalid Action Rate & Repetitive Action Rate \\
\midrule
{GiGPO} & 32.27 & 4.08 & 45.31 & 5.53 & 0.14 & 43.25 \\
{Ours} & \textbf{22.02} & \textbf{2.79} & \textbf{11.53} & \textbf{5.18} &  \textbf{0.09} & \textbf{40.83}\\
\bottomrule
\end{tabular}
}
\end{table*}

In this section, we report the experimental results of our proposed approach of productive state-space exploration. We evaluate our method against several recent baselines across two interactive agent benchmarks: \texttt{ALFWorld}~\citep{shridhar2020alfworld}  and \texttt{WebShop}~\cite{yao2022webshop}. With this, we aim to demonstrate two main benefits of our proposed approach. First, the performance improvements due to the intrinsic reward signals. Second, the exploration behavior and its usefulness in overcoming the repeated actions and looping behavior of RL trained LLM agents.

\subsection{Baselines}
\label{sec:baselines}
For the performance comparison, we benchmark against several state-of-the-art recent baseline models. These include prompting frameworks and those trained with RL. The prompting baselines are Gemini-2.5~\cite{team2023gemini}, GPT-4o~\cite{achiam2023gpt}, Qwen-2.5~\citep{qwen2025qwen25technicalreport}, ReAct~\citep{yao2022react} and Reflexion~\citep{shinn2023reflexion} frameworks. The RL trained baselines include PPO~\cite{schulman2017proximal}, RLOO~\cite{kool2019buy,ahmadian2024back}, GRPO~\cite{shao2024deepseekmath}, GiGPO~\cite{feng2026group}, ProxMO~\cite{fang2026proximity}. The most recent and relevant baselines for our approach are GiGPO and ProxMO, since we aim to address challenges in multi-turn LLM agent training with RL. We do not re-run these baselines, all Table~\ref{tab:main} numbers except our method are quoted from their original publications. For the additional analyses (Table~\ref{tab:ood_main}, Sections~\ref{sec:experimental_evaluation}--\ref{sec:ablation} and Appendix~\ref{sec:supp_experiments}), we re-train GiGPO with the latest official \emph{verl-agent} implementation, without the intrinsic penalty, and evaluate it under exactly the same protocol as our agent.

\subsection{Training Details:}
We train Qwen-2.5-1.5B-Instruct model with the proposed state-space exploration rewards along with Proximal Policy Optimization (PPO) based RL training. We run all our experiments on top of \emph{verl-agent} framework released by GiGPO~\cite{feng2026group}. We train for 300 steps with all other hyperparameters kept same as in the framework. We set the maximum steps at 50 for \texttt{ALFWorld} and 15 for \texttt{WebShop} as with GiGPO. The strength of state-space exploration penalty, $\gamma$, as defined in Equation~\ref{eq:main}, is set $\frac{1}{40}$ of the final success reward. Full reproducibility details ($\gamma$ selection, reward normalization, splits) are in Appendix~\ref{sec:reproducibility}.

\begin{figure*}[t]
\centering
  \includegraphics[width=\linewidth]{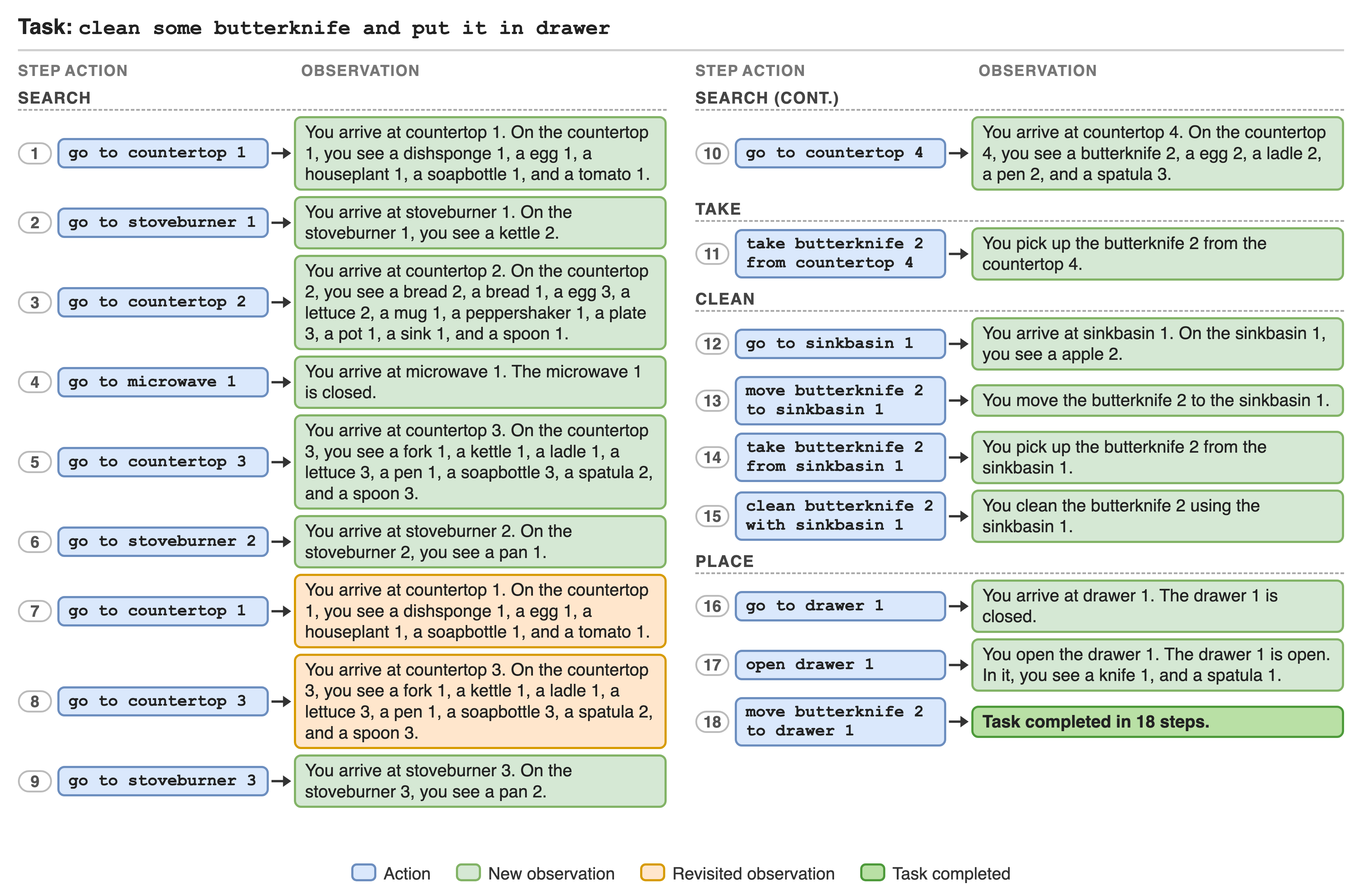}
  \caption{A successful \texttt{ALFWorld} episode of the agent trained with our intrinsic reward, shown in two halves. The agent searches eight locations, revisiting only two, then takes, cleans and places the butterknife in 18 steps without a single invalid action.}
  \label{fig:qualitative}
\end{figure*}
\vspace{-0.4cm}

\subsection{Results}
Table~\ref{tab:main} compares our proposed method (labelled OURS) against existing prompting-based and reinforcement learning (RL) training approaches on the \texttt{ALFWorld} and \texttt{WebShop} benchmarks. Following prior work, we report the average success rate for each \texttt{ALFWorld} subtask together with the overall success rate. For \texttt{WebShop}, we report both the average task score and the average success rate.

We primarily compare our approach against recent RL optimization methods, namely GiGPO (Group-in-Group Policy Optimization) and ProxMO~\cite{fang2026proximity}, which represent strong baselines for LLM agent training. Compared to GiGPO, our method achieves consistent gains across all \texttt{ALFWorld} subtasks, improving the overall success rate from $86.7\%$ to $94.0\%$. The improvements are particularly pronounced on challenging long-horizon tasks such as \textit{Cool} ($+7.6\%$) and \textit{Pick2} ($+7.5\%$), indicating stronger sequential reasoning and more effective action planning. On \texttt{WebShop}, our intrinsic-rewards approach improves the average task score from $83.1$ to $89.2$ and substantially increases the success rate from $65.0\%$ to $78.3\%$, demonstrating superior capability in complex web-based interaction tasks.

Compared to ProxMO, a recent proximity-based multi-turn optimization framework designed for practical credit assignment in LLM agent training, our approach still achieves better overall performance across both benchmarks. Specifically, it improves the overall \texttt{ALFWorld} success rate from $90.6\%$ to $94.0\%$, while increasing the \texttt{WebShop} success rate from $67.1\%$ to $78.3\%$. The improvements on \texttt{ALFWorld} are especially significant, suggesting that our method generalizes more effectively in environments requiring long-horizon exploration, recovery from intermediate mistakes, and consistent multi-turn decision making.

\textbf{Generalization to unseen scenes.} To test whether these gains carry over to environments not seen during training, we evaluate our agent and the re-trained GiGPO agent (Section~\ref{sec:baselines}) on the \texttt{ALFWorld} out-of-distribution split, which contains unseen scenes, and on the in-distribution split under the same protocol (128 episodes each, same games for both agents; Appendix~\ref{sec:supp_experiments}). Since these are single evaluations of one checkpoint per method, the seen-scene numbers differ slightly from Table~\ref{tab:main}, where the GiGPO result is quoted from its original paper. Table~\ref{tab:ood_main} shows that our agent generalizes markedly better. Its success rate is nearly unchanged on unseen scenes, whereas GiGPO's drops noticeably, so the gap between the two agents roughly doubles, and our agent leads on every task type.

\begin{table}[t]
\centering
\small
\caption{Success rate (\%) on seen and unseen \texttt{ALFWorld} scenes for our agent and the re-trained GiGPO.}
\label{tab:ood_main}
\resizebox{\columnwidth}{!}{
\begin{tabular}{llccccccc}
\toprule
Split & Method & Pick & Look & Clean & Heat & Cool & Pick2 & All \\
\midrule
\multirow{2}{*}{Seen} & GiGPO & 92.3 & 84.6 & 92.0 & \textbf{100.0} & \textbf{81.2} & 83.3 & 89.1 \\
 & Ours & \textbf{100.0} & \textbf{100.0} & \textbf{100.0} & 90.9 & \textbf{81.2} & \textbf{91.7} & \textbf{95.3} \\
\midrule
\multirow{2}{*}{Unseen} & GiGPO & 82.6 & 91.3 & 79.2 & 73.7 & 87.0 & 81.2 & 82.8 \\
 & Ours & \textbf{95.7} & \textbf{100.0} & \textbf{91.7} & \textbf{89.5} & \textbf{100.0} & \textbf{87.5} & \textbf{94.5} \\
\bottomrule
\end{tabular}
}
\end{table}

\subsection{Intrinsic Productive Exploration}
\label{sec:experimental_evaluation}

To look beyond overall success rates, we evaluate the behavioral dynamics of our agent against the competitive baseline GiGPO~\citep{feng2026group}. This analysis focuses on \textit{intrinsic productive exploration} - the agent's capacity to autonomously discover optimal state-action trajectories through targeted, purposeful environment interactions, rather than wasting computational steps on redundant or unproductive behaviors. We define three behavioral diagnostics across the \texttt{ALFWorld} and \texttt{WebShop} benchmarks to quantify the efficiency and productivity of the agent's intrinsic exploration strategy:

\begin{itemize}[leftmargin=*]
    \item \textbf{Mean Episode Length:} The average number of sequential steps to complete a task instance.
    \item \textbf{Invalid Action Rate:} The frequency of actions that fail (violate action preconditions). It is detected as described in Appendix~\ref{sec:reproducibility}.
    \item \textbf{Repetitive Action Rate:} The frequency of steps spent executing identical actions. We consider an action repetitive when either the action or the resulting observation is repeated in the trajectory.
\end{itemize}

The empirical results detailing these behavioral metrics are shown in Table~\ref{tab:exploration}. In \texttt{ALFWorld}, our method compresses the {Mean Episode Length} from $\approx 32$ steps down to $\approx 22$ relative to GiGPO. Similar trend persists in the \texttt{WebShop}, where the gains are not as profound as in \texttt{ALFWorld}, however help in score improvement compared to GiGPO. More broadly, the results suggest:
\begin{itemize}[leftmargin=*]
    \item \textbf{Learning of Environment Dynamics:} The improvement in Mean Episode Length and Invalid Action Rate shows that the agent better understands the environment dynamics and, in particular, rarely repeats actions that have already failed.
    \item \textbf{Loop Resolution and Novelty Seeking:} The most supporting evidence of productive exploration is revealed by the \textit{Repetitive Action Rate} in \texttt{ALFWorld}. While GiGPO spends $45.31\%$ of its execution trajectory repeating identical commands, our approach drops this rate dramatically to $11.53\%$. This absolute reduction of $33.78\%$ demonstrates that our intrinsic penalty inherently disincentivizes stale, repeated states. Instead, it forces the agent to continuously break out of loops and seek novel, progressive environmental interactions.
\end{itemize}

Both the above inferences can also be seen in qualitative action trajectories as shown in Figure~\ref{fig:qualitative}. The contrast in the decrease in the number of invalid actions relative to Figure~\ref{fig:intro} suggests our approach of penalizing actions which repeat states is indeed helpful in reducing the redundancy of the agent's trajectory.

To examine information gathering more closely, we record every step of our agent and of the re-trained GiGPO agent (Section~\ref{sec:baselines}). Our agent searches the partially observable environment more widely while wasting fewer steps: it visits more distinct locations, revisits fewer of them, obtains new information from a larger share of its information-gathering actions, and finds the target object more often. The two agents also fail differently. GiGPO's failed episodes are dominated by repeated observations, whereas ours keep exploring new states until the step budget runs out. These advantages hold on both seen and unseen scenes. Per action, both agents satisfy preconditions for nearly all of their actions; our agent's advantage lies in avoiding repeated failures rather than in avoiding a first one (Appendix~\ref{sec:supp_experiments}).

\subsection{Effect of the Intrinsic Penalty on Exploration}
\label{sec:ablation}

To isolate the effect of the penalty, we train two PPO agents from the same base model with identical settings, differing only in the penalty weight ($\gamma=0$ vs.\ $\gamma=\frac{1}{40}$ of the success reward), and evaluate them after 100 training steps (details in Appendix~\ref{sec:supp_experiments}). We measure exploration by the duplicate-observation rate (steps whose observation repeats an earlier one), the location revisit rate (\texttt{go to} actions to an already visited location), the new-information rate (information-gathering actions \texttt{go to}, \texttt{open} and \texttt{examine} that reveal a new observation), and whether the target object is ever observed. A precondition violation is an executed action outside the environment's admissible set; we report violations after the first one in an episode and the recovery rate, i.e.\ how often the action following a violation is admissible.

\textit{The penalty broadens exploration during training.} Starting from identical behavior, the penalized agent reduces repeated observations much faster and covers a substantially larger part of the state space (Figure~\ref{fig:gamma_ablation}). This broader exploration precedes its gain in success, which emerges after the first 50 steps.

\textit{The learned policy explores productively at test time.} It repeats observations and revisits locations far less often, gains new information from more of its actions, and finds the target object more often, equally on seen and unseen scenes (Table~\ref{tab:ablation}).

\textit{Better exploration leads to better use of feedback and higher success.} Since repeated failures reveal nothing new, the penalized agent makes far fewer repeated precondition violations, recovers more often, satisfies preconditions more often, and succeeds considerably more often on both splits. Absolute success is lower than in Table~\ref{tab:main} because of the shorter training.

\begin{figure}[t]
\centering
\includegraphics[width=\columnwidth]{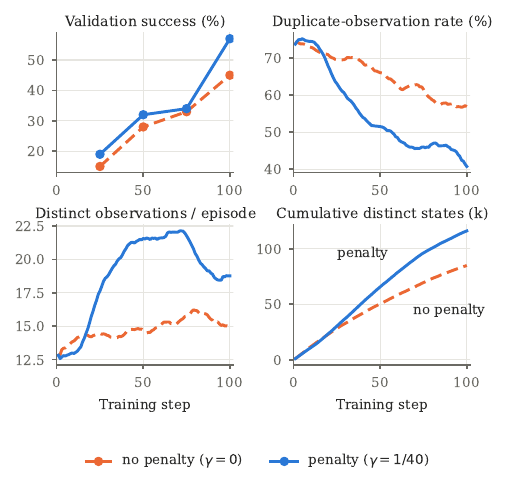}
\caption{Controlled comparison of our model with and without the intrinsic penalty during the first 100 training steps (identical settings). Top: validation success and duplicate-observation rate in the training rollouts. Bottom: distinct observations per episode in the training rollouts and the cumulative number of distinct (game, observation) states visited during training. Rollout curves are 10-step moving averages.}
\label{fig:gamma_ablation}
\end{figure}

\begin{table}[t]
\centering
\small
\caption{Effect of the intrinsic penalty: PPO with $\gamma=0$ vs.\ ours (identical settings, 100 training steps), same 128 episodes per split.}
\label{tab:ablation}
\resizebox{\columnwidth}{!}{
\begin{tabular}{lcccc}
\toprule
 & \multicolumn{2}{c}{Seen scenes} & \multicolumn{2}{c}{Unseen scenes} \\
\cmidrule(lr){2-3}\cmidrule(lr){4-5}
Metric & $\gamma=0$ & Ours & $\gamma=0$ & Ours \\
\midrule
\multicolumn{5}{l}{\textit{Exploration}} \\
\quad Duplicate-observation rate & 59.3\% & \textbf{37.0\%} & 61.8\% & \textbf{36.6\%} \\
\quad Location revisit rate & 68.6\% & \textbf{41.3\%} & 68.7\% & \textbf{42.3\%} \\
\quad New-information rate of info.\ actions & 43.5\% & \textbf{67.4\%} & 40.4\% & \textbf{67.1\%} \\
\quad Distinct locations visited & 4.68 & \textbf{6.17} & 4.87 & \textbf{6.88} \\
\quad Target object found & 76.6\% & \textbf{87.5\%} & 64.8\% & \textbf{85.9\%} \\
\midrule
\multicolumn{5}{l}{\textit{Preconditions}} \\
\quad Precondition satisfied & 68.0\% & \textbf{82.4\%} & 67.8\% & \textbf{81.9\%} \\
\quad Violations after the first, per episode & 9.87 & \textbf{4.65} & 10.12 & \textbf{5.09} \\
\quad Recovery after a violation & 47.2\% & \textbf{59.9\%} & 50.8\% & \textbf{64.2\%} \\
\bottomrule
\end{tabular}
}
\end{table}

\section{Conclusion}
In this paper, we addressed the tendency of RL-trained LLM agent policies to repeat invalid actions and fall into action loops in multi-turn, long-horizon environments.
We introduced a conditional state-space exploration framework driven by an information-aware intrinsic reward.
Instead of directly maximizing unconstrained state entropy, which risks driving the agent toward task-irrelevant trajectories, our approach reframes exploration by penalizing actions that produce redundant state information. Our experimental evaluation demonstrates that this framework achieves substantial performance gains and significantly mitigates action repetition and circular reasoning patterns.

\section*{Limitations}

Our experiments are conducted primarily on smaller reasoning models. Because exploration strategies are inherently influenced by a model's overall capacity and pre-trained priors, larger instruction-tuned models may exhibit fewer failures than those documented in this study. Growth in model scale often mitigates baseline reasoning bottlenecks; nevertheless, optimizing smaller models remains highly valuable. Enhancing the capabilities of these smaller architectures enables more practical, targeted applications, particularly in hardware-constrained environments where deployment efficiency is critical.

Additionally, this work focuses specifically on text-based environments within ALFWorld and WebShop, which feature relatively small, well-defined action spaces that lend themselves well to set representations. Environments with larger, continuous, or stochastic action spaces may exhibit vastly different dynamics under this set-based state representation, potentially limiting direct generalizability.

Regardless of these limitations, our approach provides a structurally sound framework for agent reasoning and state representation in multi-turn settings. Ultimately, the insights gained from our set-based representation and reward design establish a foundation for future scaling and adaptation across more complex, open-ended environments.

\section*{Acknowledgments}
This research was supported by fundings from the Hong Kong RGC General Research Fund (152228/23E, 162161/24E, 162116/25E, 162180/25E), National Natural Science Foundation of China (NSFC) Key Program (No.62532005), Collaborative Research Fund (No. C1042-23GF, No. 5097-25G), NSFC/RGC Collaborative Research Scheme (Grant No. 62461160332 \& CRS\_HKUST602/24), Research Impact Fund (No. R5011-23), and Areas of Excellence Scheme (AoE/E-601/22-R).

\bibliography{custom}

@inproceedings{carta2023grounding,
  title={Grounding large language models in interactive environments with online reinforcement learning},
  author={Carta, Thomas and Romac, Cl{\'e}ment and Wolf, Thomas and Lamprier, Sylvain and Sigaud, Olivier and Oudeyer, Pierre-Yves},
  booktitle={International conference on machine learning},
  pages={3676--3713},
  year={2023},
  organization={PMLR}
}

@article{guan2026evaluating,
  title={Evaluating llm-based agents for multi-turn conversations: A survey},
  author={Guan, Shengyue and Wang, Jindong and Bian, Jiang and Zhu, Bin and Lou, Jian-Guang and Xiong, Haoyi},
  journal={ACM Transactions on Intelligent Systems and Technology},
  volume={17},
  number={4},
  pages={1--40},
  year={2026},
  publisher={ACM New York, NY}
}

@article{shani2024multi,
  title={Multi-turn reinforcement learning with preference human feedback},
  author={Shani, Lior and Rosenberg, Aviv and Cassel, Asaf and Lang, Oran and Calandriello, Daniele and Zipori, Avital and Noga, Hila and Keller, Orgad and Piot, Bilal and Szpektor, Idan and others},
  journal={Advances in Neural Information Processing Systems},
  volume={37},
  pages={118953--118993},
  year={2024}
}

@article{liu2025reinforcement,
  title={Reinforcement learning meets large language models: A survey of advancements and applications across the llm lifecycle},
  author={Liu, Keliang and Yang, Dingkang and Qian, Ziyun and Yin, Weijie and Wang, Yuchi and Li, Hongsheng and Liu, Jun and Zhai, Peng and Liu, Yang and Zhang, Lihua},
  journal={arXiv preprint arXiv:2509.16679},
  year={2025}
}

@article{zhang2025survey,
  title={A survey of reinforcement learning for large reasoning models},
  author={Zhang, Kaiyan and Zuo, Yuxin and He, Bingxiang and Sun, Youbang and Liu, Runze and Jiang, Che and Fan, Yuchen and Tian, Kai and Jia, Guoli and Li, Pengfei and others},
  journal={arXiv preprint arXiv:2509.08827},
  year={2025}
}

@article{xi2025agentgym,
  title={Agentgym-rl: Training llm agents for long-horizon decision making through multi-turn reinforcement learning},
  author={Xi, Zhiheng and Huang, Jixuan and Liao, Chenyang and Huang, Baodai and Guo, Honglin and Liu, Jiaqi and Zheng, Rui and Ye, Junjie and Zhang, Jiazheng and Chen, Wenxiang and others},
  journal={arXiv preprint arXiv:2509.08755},
  year={2025}
}

@article{kalyan2025reinforcement,
  title={Reinforcement Learning for Long-Horizon Multi-Turn Search Agents},
  author={Kalyan, Vivek and Andrews, Martin},
  journal={arXiv preprint arXiv:2510.24126},
  year={2025}
}

@article{chentanez2004intrinsically,
  title={Intrinsically motivated reinforcement learning},
  author={Chentanez, Nuttapong and Barto, Andrew and Singh, Satinder},
  journal={Advances in neural information processing systems},
  volume={17},
  year={2004}
}

@article{aubret2019survey,
  title={A survey on intrinsic motivation in reinforcement learning},
  author={Aubret, Arthur and Matignon, Laetitia and Hassas, Salima},
  journal={arXiv preprint arXiv:1908.06976},
  year={2019}
}

@article{gao2025navigate,
  title={Navigate the unknown: Enhancing llm reasoning with intrinsic motivation guided exploration},
  author={Gao, Jingtong and Pan, Ling and Wang, Yejing and Zhong, Rui and Lu, Chi and Wang, Maolin and Cai, Qingpeng and Jiang, Peng and Zhao, Xiangyu},
  journal={arXiv preprint arXiv:2505.17621},
  year={2025}
}

@inproceedings{tennant2025moral,
  title={Moral Alignment for {LLM} Agents},
  author={Tennant, Elizaveta and Hailes, Stephen and Musolesi, Mirco},
  booktitle={International Conference on Learning Representations},
  year={2025}
}

@article{shinn2023reflexion,
  title={Reflexion: Language agents with verbal reinforcement learning},
  author={Shinn, Noah and Cassano, Federico and Gopinath, Ashwin and Narasimhan, Karthik and Yao, Shunyu},
  journal={Advances in neural information processing systems},
  volume={36},
  pages={8634--8652},
  year={2023}
}

@article{schulman2017proximal,
  title={Proximal policy optimization algorithms},
  author={Schulman, John and Wolski, Filip and Dhariwal, Prafulla and Radford, Alec and Klimov, Oleg},
  journal={arXiv preprint arXiv:1707.06347},
  year={2017}
}

@article{shao2024deepseekmath,
  title={Deepseekmath: Pushing the limits of mathematical reasoning in open language models},
  author={Shao, Zhihong and Wang, Peiyi and Zhu, Qihao and Xu, Runxin and Song, Junxiao and Bi, Xiao and Zhang, Haowei and Zhang, Mingchuan and Li, YK and Wu, Yang and others},
  journal={arXiv preprint arXiv:2402.03300},
  year={2024}
}

@article{kool2019buy,
  title={Buy 4 reinforce samples, get a baseline for free!},
  author={Kool, Wouter and van Hoof, Herke and Welling, Max},
  year={2019}
}

@inproceedings{ahmadian2024back,
  title={Back to basics: Revisiting REINFORCE-style optimization for learning from human feedback in LLMs},
  author={Ahmadian, Arash and Cremer, Chris and Gall{\'e}, Matthias and Fadaee, Marzieh and Kreutzer, Julia and Pietquin, Olivier and {\"U}st{\"u}n, Ahmet and Hooker, Sara},
  booktitle={Proceedings of the 62nd Annual Meeting of the Association for Computational Linguistics (Volume 1: Long Papers)},
  pages={12248--12267},
  year={2024}
}

@article{feng2024agile,
  title={Agile: A novel reinforcement learning framework of llm agents},
  author={Feng, Peiyuan and He, Yichen and Huang, Guanhua and Lin, Yuan and Zhang, Hanchong and Zhang, Yuchen and Li, Hang},
  journal={Advances in Neural Information Processing Systems},
  volume={37},
  pages={5244--5284},
  year={2024}
}

@inproceedings{xiang2024retrospex,
  title={Retrospex: Language agent meets offline reinforcement learning critic},
  author={Xiang, Yufei and Shen, Yiqun and Zhang, Yeqin and Cam-Tu, Nguyen},
  booktitle={Proceedings of the 2024 Conference on Empirical Methods in Natural Language Processing},
  pages={4650--4666},
  year={2024}
}

@inproceedings{zhou2024archer,
author = {Zhou, Yifei and Zanette, Andrea},
title = {ArCHer: training language model agents via hierarchical multi-turn RL},
year = {2024},
publisher = {JMLR.org},
booktitle = {Proceedings of the 41st International Conference on Machine Learning},
articleno = {2574},
numpages = {32},
location = {Vienna, Austria},
series = {ICML'24}
}

@inproceedings{ji2025treesearch,
  title={Tree Search for {LLM} Agent Reinforcement Learning},
  author={Ji, Yuxiang and Ma, Ziyu and Wang, Yong and Chen, Guanhua and Chu, Xiangxiang and Wu, Liaoni},
  booktitle={International Conference on Learning Representations},
  year={2026}
}

@article{xu2025epo,
  title={Epo: Entropy-regularized policy optimization for llm agents reinforcement learning},
  author={Xu, Wujiang and Zhao, Wentian and Wang, Zhenting and Li, Yu-Jhe and Jin, Can and Jin, Mingyu and Mei, Kai and Wan, Kun and Metaxas, Dimitris N},
  journal={arXiv preprint arXiv:2509.22576},
  year={2025}
}

@article{wang2025ragen,
  title={Ragen: Understanding self-evolution in llm agents via multi-turn reinforcement learning},
  author={Wang, Zihan and Wang, Kangrui and Wang, Qineng and Zhang, Pingyue and Li, Linjie and Yang, Zhengyuan and Jin, Xing and Yu, Kefan and Nguyen, Minh Nhat and Liu, Licheng and others},
  journal={arXiv preprint arXiv:2504.20073},
  year={2025}
}

@inproceedings{pathak2017curiosity,
  title={Curiosity-driven exploration by self-supervised prediction},
  author={Pathak, Deepak and Agrawal, Pulkit and Efros, Alexei A and Darrell, Trevor},
  booktitle={International conference on machine learning},
  pages={2778--2787},
  year={2017},
  organization={PMLR}
}

@inproceedings{burda2018large,
  Author = {Burda, Yuri and
  Edwards, Harri and Pathak, Deepak and
  Storkey, Amos and Darrell, Trevor and
  Efros, Alexei A.},
  Title = {Large-Scale Study of
  Curiosity-Driven Learning},
  Booktitle = {ICLR},
  Year = {2019}
}

@inproceedings{bellemare2016unifying,
author = {Bellemare, Marc G. and Srinivasan, Sriram and Ostrovski, Georg and Schaul, Tom and Saxton, David and Munos, R\'{e}mi},
title = {Unifying count-based exploration and intrinsic motivation},
year = {2016},
isbn = {9781510838819},
publisher = {Curran Associates Inc.},
address = {Red Hook, NY, USA},
booktitle = {Proceedings of the 30th International Conference on Neural Information Processing Systems},
pages = {1479–1487},
numpages = {9},
location = {Barcelona, Spain},
series = {NIPS'16}
}

@article{kim2023accelerating,
  title={Accelerating reinforcement learning with value-conditional state entropy exploration},
  author={Kim, Dongyoung and Shin, Jinwoo and Abbeel, Pieter and Seo, Younggyo},
  journal={Advances in Neural Information Processing Systems},
  volume={36},
  pages={31811--31830},
  year={2023}
}

@inproceedings{mutti2021task,
  title={Task-agnostic exploration via policy gradient of a non-parametric state entropy estimate},
  author={Mutti, Mirco and Pratissoli, Lorenzo and Restelli, Marcello},
  booktitle={Proceedings of the AAAI conference on artificial intelligence},
  volume={35},
  number={10},
  pages={9028--9036},
  year={2021}
}

@inproceedings{ashlag2026state,
  title={State Entropy Regularization for Robust Reinforcement Learning},
  author={Ashlag, Yonatan and Koren, Uri and Mutti, Mirco and Derman, Esther and Bacon, Pierre-Luc and Mannor, Shie},
  booktitle={Advances in Neural Information Processing Systems},
  year={2025}
}

@inproceedings{
nourzad2026mira,
title={{MIRA}: Memory-Integrated Reinforcement Learning Agent  with Limited {LLM} Guidance},
author={Narjes Nourzad and Carlee Joe-Wong},
booktitle={The Fourteenth International Conference on Learning Representations},
year={2026},
url={https://openreview.net/forum?id=oWagByDNPc}
}

@article{fang2026proximity,
  title={Proximity-Based Multi-Turn Optimization: Practical Credit Assignment for LLM Agent Training},
  author={Fang, Yangyi and Lin, Jiaye and Fu, Xiaoliang and Qin, Cong and Shi, Haolin and Liu, Chang and Zhao, Peilin},
  journal={arXiv preprint arXiv:2602.19225},
  year={2026}
}

@inproceedings{feng2026group,
  title={Group-in-Group Policy Optimization for {LLM} Agent Training},
  author={Feng, Lang and Xue, Zhenghai and Liu, Tingcong and An, Bo},
  booktitle={Advances in Neural Information Processing Systems},
  year={2025}
}

@article{shridhar2020alfworld,
  title={Alfworld: Aligning text and embodied environments for interactive learning},
  author={Shridhar, Mohit and Yuan, Xingdi and C{\^o}t{\'e}, Marc-Alexandre and Bisk, Yonatan and Trischler, Adam and Hausknecht, Matthew},
  journal={arXiv preprint arXiv:2010.03768},
  year={2020}
}

@article{yao2022webshop,
  title={Webshop: Towards scalable real-world web interaction with grounded language agents},
  author={Yao, Shunyu and Chen, Howard and Yang, John and Narasimhan, Karthik},
  journal={Advances in Neural Information Processing Systems},
  volume={35},
  pages={20744--20757},
  year={2022}
}

@misc{qwen2025qwen25technicalreport,
      title={Qwen2.5 Technical Report}, 
      author={Qwen and : and An Yang and Baosong Yang and Beichen Zhang and Binyuan Hui and Bo Zheng and Bowen Yu and Chengyuan Li and Dayiheng Liu and Fei Huang and Haoran Wei and Huan Lin and Jian Yang and Jianhong Tu and Jianwei Zhang and Jianxin Yang and Jiaxi Yang and Jingren Zhou and Junyang Lin and Kai Dang and Keming Lu and Keqin Bao and Kexin Yang and Le Yu and Mei Li and Mingfeng Xue and Pei Zhang and Qin Zhu and Rui Men and Runji Lin and Tianhao Li and Tianyi Tang and Tingyu Xia and Xingzhang Ren and Xuancheng Ren and Yang Fan and Yang Su and Yichang Zhang and Yu Wan and Yuqiong Liu and Zeyu Cui and Zhenru Zhang and Zihan Qiu},
      year={2025},
      eprint={2412.15115},
      archivePrefix={arXiv},
      primaryClass={cs.CL},
      url={https://arxiv.org/abs/2412.15115}, 
}

@inproceedings{yao2022react,
  title={{ReAct}: Synergizing Reasoning and Acting in Language Models},
  author={Yao, Shunyu and Zhao, Jeffrey and Yu, Dian and Du, Nan and Shafran, Izhak and Narasimhan, Karthik and Cao, Yuan},
  booktitle={International Conference on Learning Representations},
  year={2023}
}

@article{achiam2023gpt,
  title={Gpt-4 technical report},
  author={Achiam, Josh and Adler, Steven and Agarwal, Sandhini and Ahmad, Lama and Akkaya, Ilge and Aleman, Florencia Leoni and Almeida, Diogo and Altenschmidt, Janko and Altman, Sam and Anadkat, Shyamal and others},
  journal={arXiv preprint arXiv:2303.08774},
  year={2023}
}

@article{team2023gemini,
  title={Gemini: a family of highly capable multimodal models},
  author={Team, Gemini and Anil, Rohan and Borgeaud, Sebastian and Alayrac, Jean-Baptiste and Yu, Jiahui and Soricut, Radu and Schalkwyk, Johan and Dai, Andrew M and Hauth, Anja and Millican, Katie and others},
  journal={arXiv preprint arXiv:2312.11805},
  year={2023}
}

\clearpage

\appendix
\section{Appendix}
\label{sec:appendix}

\subsection{Datasets:}

\paragraph{ALFWorld.}
\texttt{ALFWorld}~\citep{shridhar2020alfworld} is a text-based interactive environment designed for studying embodied decision making and long-horizon reasoning in language agents. Derived from the ALFRED benchmark, it converts household manipulation tasks into a partially observable textual setting where agents interact through natural language actions and observations. Tasks typically involve navigation, object manipulation, and sequential planning over multiple turns, making the environment particularly suitable for evaluating reinforcement learning methods for LLM agents under sparse rewards and delayed feedback. Concretely, \texttt{ALFWorld} contains 6 task types of various difficulty. The tasks are:
\begin{itemize}
    \item Pick \& Place: agent must find an object of a specified type, pick it up, find the correct location to place it, and place it down there.
    \item Examine in Light: agent must find an object of a specified type, locate and turn on a light source with the object still in hand.
    \item Clean \& Place: agent must find an object of a specified type, pick it up, move to a sink or a basin, turn on the faucet and wash the object, find the correct location to place it, and place it down there.
    \item Heat \& Place: agent must find an object of a specified type, pick it up, move to the microwave, heat the object by turning on the microwave, find the correct location to place it, and place it down there.
    \item Cool \& Place: agent must find an object of a specified type, pick it up, move to the fridge, place the object in the fridge to cool it, find the correct location to place it, and place it down there.
    \item Pick Two \& Place: agent must find an object of a specified type, find the correct location to place it, place it down there, look for another object of a specified type, pick up the object, return to the previous location, place it down there together with the first object.
\end{itemize}

\paragraph{WebShop.}
\texttt{WebShop}~\citep{yao2022webshop} is an e-commerce interaction benchmark that evaluates language agents on realistic online shopping tasks. In this environment, agents must search through large collections of products, inspect item descriptions and reviews, and iteratively refine decisions to satisfy user instructions. Unlike embodied environments, WebShop emphasizes retrieval, preference understanding, and multi-step reasoning in semi-structured web settings. The benchmark is widely used for evaluating the effectiveness of sequential decision-making and reinforcement learning approaches for interactive LLM agents. Concretely, the \texttt{WebShop} dataset contains 1,000,000 products, 670 unique attributes, 842,849 unique options, extracted publicly from Amazon. It presents over 12,000 annotated trajectories and an action space of 8 valid actions.

\subsection{Additional Experiments}
\label{sec:supp_experiments}

This subsection reports additional \texttt{ALFWorld} experiments that analyze (i) whether the trained agent gathers information more effectively at test time, and (ii) whether more of its actions satisfy their preconditions. Sections~\ref{sec:experimental_evaluation} and~\ref{sec:ablation} summarize the main findings; here we give the full results.

\subsubsection{Test-Time Exploration and Preconditions}
\label{sec:supp_gigpo}

\paragraph{Setup.}
We compare our agent with a GiGPO agent that we trained ourselves using the released \emph{verl-agent} code. Both agents play the same 128 evaluation episodes (91 distinct games) from the \texttt{ALFWorld} in-distribution split, with sampling temperature 0.4 and at most 50 steps. We record every step (observation, admissible actions, chosen action and next observation) and compute all metrics below per episode, then average over episodes.

\paragraph{Test-time exploration and information gathering.}
\texttt{ALFWorld} is partially observable: the contents of a receptacle are revealed only when the agent goes to it or opens it. Table~\ref{tab:supp_info} measures how the agents gather information. Our agent covers more of the environment with less effort: it visits more distinct locations per episode while taking fewer steps, revisits locations less often, and a larger share of its information-gathering actions (\texttt{go to}, \texttt{open}, \texttt{examine}) reveals a previously unseen observation. As a result, it finds the target object more often. When both agents find the target, our agent gets there in fewer steps despite searching more locations, i.e.\ it searches more widely and wastes fewer steps on revisits. The failure modes also differ. Failed episodes of both agents exhaust the step budget, but GiGPO's failures are dominated by repetition, whereas our failed episodes keep exploring new states.

\begin{table}[h]
\centering
\small
\caption{Test-time information gathering (per-episode means). $^\dagger$Episodes in which both agents find the target.}
\label{tab:supp_info}
\resizebox{\columnwidth}{!}{
\begin{tabular}{lcc}
\toprule
Metric & GiGPO & Ours \\
\midrule
Success rate & 89.1\% & \textbf{95.3\%} \\
Episode length (steps) & 14.7 & \textbf{12.5} \\
Distinct locations visited & 3.66 & \textbf{5.16} \\
Location revisit rate & 31.5\% & \textbf{18.9\%} \\
New-information rate of info.\ actions & 81.2\% & \textbf{89.2\%} \\
Duplicate-observation rate & 16.7\% & \textbf{8.7\%} \\
Target object found & 91.4\% & \textbf{96.9\%} \\
Steps until target found$^\dagger$ & 4.46 & \textbf{4.19} \\
Locations searched before target$^\dagger$ & 2.27 & 3.19 \\
Duplicate-obs.\ rate in failed episodes & 72\% & \textbf{27\%} \\
\bottomrule
\end{tabular}
}
\end{table}

\paragraph{Precondition satisfaction at test time.}
\texttt{ALFWorld} provides the set of admissible actions in each state, which are exactly the actions whose preconditions are satisfied. We therefore measure, for each action type, how many actions each agent executes and the fraction of them that are admissible (Table~\ref{tab:supp_precond}). On the same episodes, our agent completes the tasks with about $16\%$ fewer actions. The saving comes almost entirely from navigation, as it needs far fewer \texttt{go to} actions, while it uses information-gathering actions such as \texttt{open} and \texttt{examine} more often. Both agents satisfy preconditions for nearly all of their actions, with a small difference in GiGPO's favor overall. The per-action picture is mixed: our agent is more reliable on \texttt{take} and \texttt{examine}, while GiGPO is more reliable on \texttt{put}. Against this baseline, our agent therefore does not show a higher precondition-satisfaction rate per action at test time. Its advantage lies in reaching the goal with fewer actions and in avoiding \emph{repeated} uninformative actions (Table~\ref{tab:supp_info}), which is the behavior the penalty directly targets.

\begin{table}[h]
\centering
\small
\caption{Actions per episode and precondition satisfaction by action type. $^\ddagger$Includes \texttt{close} and \texttt{look}.}
\label{tab:supp_precond}
\resizebox{\columnwidth}{!}{
\begin{tabular}{lcccccc}
\toprule
 & \multicolumn{3}{c}{GiGPO} & \multicolumn{3}{c}{Ours} \\
\cmidrule(lr){2-4}\cmidrule(lr){5-7}
Action & Per ep. & $n$ & Satisfied & Per ep. & $n$ & Satisfied \\
\midrule
\texttt{go to}   & 10.24 & 1311 & 98.6\%  & \textbf{7.26} & 929 & 98.3\%  \\
\texttt{take}    & 1.58  & 202  & 93.6\%  & 1.43 & 183 & 100.0\% \\
\texttt{put}     & 1.38  & 176  & 99.4\%  & 1.40 & 179 & 92.2\%  \\
\texttt{open}    & 0.74  & 95   & 98.9\%  & 1.34 & 171 & 99.4\%  \\
\texttt{examine} & 0.07  & 9    & 66.7\%  & 0.37 & 47  & 80.9\%  \\
\texttt{clean}   & 0.22  & 28   & 100.0\% & 0.24 & 31  & 96.8\%  \\
\texttt{cool}    & 0.13  & 17   & 100.0\% & 0.13 & 17  & 100.0\% \\
\texttt{heat}    & 0.09  & 12   & 100.0\% & 0.09 & 11  & 100.0\% \\
\texttt{use}     & 0.23  & 30   & 100.0\% & 0.11 & 14  & 100.0\% \\
\midrule
All actions$^\ddagger$ & 14.71 & 1883 & 98.0\% & \textbf{12.38} & 1584 & 97.4\% \\
\bottomrule
\end{tabular}
}
\end{table}

\paragraph{Repeated precondition violations.}
In \texttt{ALFWorld}, every action whose preconditions are not satisfied returns the same fixed observation, \texttt{"Nothing happens."} (Appendix~\ref{sec:reproducibility}). The first violation in an episode therefore adds a new observation to the state memory and is not penalized, while every later violation in the same episode repeats this observation and is penalized. The penalty is thus not expected to prevent a first, exploratory violation, but to discourage the agent from violating preconditions \emph{again} after it has already received this feedback. Table~\ref{tab:supp_repeat} measures this behavior on the same 128 episodes, using the following definitions:
\begin{itemize}[leftmargin=*,itemsep=1pt,topsep=2pt]
    \item \textbf{Violation:} an executed action that is not in the admissible set of the state in which it is taken, i.e.\ an action whose preconditions are not satisfied.
    we report violations after the first, which are the violations which receive the intrinsic penalty.
    \emph{Per episode} divides their number by the 128 episodes.
    \item \textbf{Exact repeat of a failed action:} a violation in which the agent executes the very same action string that has already failed earlier in the same episode.
    \item \textbf{Share of all violations:} the percentage of an agent's violations that fall into the given category.
    \item \textbf{Back-to-back violations:} violations that immediately follow another violation.
    \item \textbf{Recovery after a violation:} the percentage of violations whose next action is admissible.
\end{itemize}
Our agent is less likely to violate preconditions again once it has received the failure feedback: it has fewer violations after the first in an episode.
The clearest difference is in exact repeats. About half of GiGPO's violations re-execute an action that has already failed in the same episode, compared with about a quarter of ours. Our agent also recovers more often, with the action following a violation more frequently being admissible. This pattern matches the mechanism of the penalty, which leaves the first violation unpenalized but penalizes repeating it.

\begin{table}[h]
\centering
\small
\caption{Repeated precondition violations (shares of each agent's violations).}
\label{tab:supp_repeat}
\resizebox{\columnwidth}{!}{
\begin{tabular}{lcc}
\toprule
Metric & GiGPO & Ours \\
\midrule
Violations after the first (total) & 26 & \textbf{23} \\
\quad per episode & 0.20 & \textbf{0.18} \\
\quad share of all violations & 70.3\% & \textbf{56.1\%} \\
Exact repeats of a failed action (total) & 19 & \textbf{10} \\
\quad share of all violations & 51.4\% & \textbf{24.4\%} \\
Back-to-back violations & 4 & \textbf{3} \\
Recovery after a violation & 88.9\% & \textbf{92.5\%} \\
Maximum violations in one episode & 8 & \textbf{7} \\
\bottomrule
\end{tabular}
}
\end{table}

\subsubsection{Generalization to Unseen Scenes}
\label{sec:supp_ood}
We repeat all of the above analyses on the \texttt{ALFWorld} out-of-distribution split, which contains scenes not seen during training. Both agents use the same checkpoints and evaluation settings as before, and both play the same 128 episodes (74 distinct games). Table~\ref{tab:supp_ood} summarizes the results, and all metrics follow the definitions given above.

The advantage of our agent grows on unseen scenes. Its success rate barely changes, while GiGPO's drops noticeably, so the gap roughly doubles to about 12 points, and our agent is ahead on every task type.
All information-gathering differences persist, mostly with larger margins.
Its failed episodes keep exploring rather than looping, and it again completes the tasks with fewer actions. On unseen scenes, it also satisfies preconditions slightly more often than GiGPO and makes fewer violations in total.

\begin{table}[h]
\centering
\small
\caption{Results on unseen scenes (out-of-distribution split). $^\dagger$Episodes in which both agents find the target.}
\label{tab:supp_ood}
\resizebox{\columnwidth}{!}{
\begin{tabular}{lcc}
\toprule
Metric & GiGPO & Ours \\
\midrule
\multicolumn{3}{l}{\textit{Information gathering}} \\
\quad Episode length (steps) & 17.4 & \textbf{14.7} \\
\quad Distinct locations visited & 4.22 & \textbf{5.80} \\
\quad Location revisit rate & 35.4\% & \textbf{22.1\%} \\
\quad New-information rate of info.\ actions & 76.7\% & \textbf{88.7\%} \\
\quad Duplicate-observation rate & 20.8\% & \textbf{9.4\%} \\
\quad Target object found & 88.3\% & \textbf{97.7\%} \\
\quad Steps until target found$^\dagger$ & 4.96 & \textbf{4.44} \\
\quad Duplicate-obs.\ rate in failed episodes & 67\% & \textbf{25\%} \\
\midrule
\multicolumn{3}{l}{\textit{Actions and preconditions}} \\
\quad Actions per episode & 17.4 & \textbf{14.6} \\
\quad Precondition satisfied (all actions) & 96.7\% & \textbf{97.3\%} \\
\quad \texttt{take} / \texttt{put} / \texttt{examine} & 94.0 / \textbf{99.4} / 40.0\% & \textbf{98.7} / 94.1 / \textbf{93.0}\% \\
\midrule
\multicolumn{3}{l}{\textit{Repeated violations}} \\
\quad Violations after the first, per episode & 0.38 & \textbf{0.23} \\
\quad \quad share of all violations & 65.8\% & \textbf{58.0\%} \\
\quad Exact repeats of a failed action (total) & 31 & \textbf{19} \\
\quad \quad share of all violations & 42.5\% & \textbf{38.0\%} \\
\quad Back-to-back violations & \textbf{16} & 17 \\
\quad Recovery after a violation & \textbf{78.1\%} & 64.6\% \\
\quad Maximum violations in one episode & 12 & \textbf{7} \\
\bottomrule
\end{tabular}
}
\end{table}

\subsubsection{Controlled Comparison with the Penalty Disabled}
\label{sec:supp_ctrl}
Here we detail the controlled comparison of Section~\ref{sec:ablation}. Both PPO agents start from Qwen2.5-1.5B-Instruct and share all hyperparameters, prompts and the random seed, so they encounter exactly the same training games at every step. We record all training rollouts (100 episodes per step) and evaluate both policies on the same validation episodes every 25 steps, using a single run per setting.

\paragraph{Exploration during training.}
Table~\ref{tab:supp_train} complements Figure~\ref{fig:gamma_ablation} with further training-rollout metrics at selected steps. Besides repeating fewer observations, the penalized agent revisits fewer locations, obtains new information from a larger share of its information-gathering actions, and runs into fewer precondition failures throughout training. The number of distinct observations per episode eventually decreases for the penalized agent because it succeeds more often and its episodes become shorter.

\begin{table}[h]
\centering
\small
\caption{Training rollouts with $\gamma=0$ vs.\ ours, averaged over the 10 steps ending at each step.}
\label{tab:supp_train}
\resizebox{\columnwidth}{!}{
\begin{tabular}{lcccccc}
\toprule
 & \multicolumn{2}{c}{Step 25} & \multicolumn{2}{c}{Step 50} & \multicolumn{2}{c}{Step 100} \\
\cmidrule(lr){2-3}\cmidrule(lr){4-5}\cmidrule(lr){6-7}
Metric & $\gamma=0$ & Ours & $\gamma=0$ & Ours & $\gamma=0$ & Ours \\
\midrule
Duplicate-observation rate & 69.6\% & \textbf{63.2\%} & 66.1\% & \textbf{51.5\%} & 56.9\% & \textbf{40.7\%} \\
Distinct observations / episode & 14.5 & \textbf{17.6} & 14.7 & \textbf{21.6} & 15.1 & \textbf{18.8} \\
Location revisit rate & 67.4\% & \textbf{64.5\%} & 65.6\% & \textbf{58.9\%} & 68.9\% & \textbf{49.8\%} \\
New-information rate of info.\ actions & 33.0\% & \textbf{38.2\%} & 37.8\% & \textbf{48.8\%} & 42.0\% & \textbf{59.7\%} \\
\texttt{"Nothing happens."} rate & 45.7\% & \textbf{40.2\%} & 44.4\% & \textbf{31.1\%} & 31.3\% & \textbf{20.2\%} \\
Cumulative distinct states (k) & 28.3 & \textbf{30.4} & 50.4 & \textbf{65.7} & 84.9 & \textbf{116.0} \\
\midrule
Training success & 11.4\% & 11.4\% & \textbf{21.3\%} & 20.7\% & 41.9\% & \textbf{49.3\%} \\
Episode length (steps) & 47.1 & 47.3 & 44.8 & 44.8 & 38.5 & \textbf{35.2} \\
Validation success & 15\% & \textbf{19\%} & 28\% & \textbf{32\%} & 45\% & \textbf{57\%} \\
\bottomrule
\end{tabular}
}
\end{table}

\paragraph{Test-time behavior of the controlled agents.}
Table~\ref{tab:supp_ctrl_test} extends Table~\ref{tab:ablation} with further metrics, using the same 128 episodes per split as Tables~\ref{tab:supp_info}--\ref{tab:supp_ood}. In addition to the effects discussed in Section~\ref{sec:ablation}, the penalized agent completes episodes with fewer actions, wins far more of the games that only one of the two agents solves, satisfies preconditions more often for almost every action type, and its failed episodes contain less repetition. At this stage of training, both agents still violate preconditions often, so absolute rates are lower than for the fully trained agent in Tables~\ref{tab:supp_info}--\ref{tab:supp_ood}.

\begin{table}[h]
\centering
\small
\caption{Test-time behavior of the controlled agents after 100 training steps.}
\label{tab:supp_ctrl_test}
\resizebox{\columnwidth}{!}{
\begin{tabular}{lcccc}
\toprule
 & \multicolumn{2}{c}{Seen scenes} & \multicolumn{2}{c}{Unseen scenes} \\
\cmidrule(lr){2-3}\cmidrule(lr){4-5}
Metric & $\gamma=0$ & Ours & $\gamma=0$ & Ours \\
\midrule
Success rate & 39.1\% & \textbf{57.8\%} & 34.4\% & \textbf{52.3\%} \\
Episode length (steps) & 37.9 & \textbf{31.0} & 40.5 & \textbf{32.7} \\
Actions per episode & 33.8 & \textbf{30.3} & 34.4 & \textbf{32.1} \\
\midrule
Distinct locations visited & 4.68 & \textbf{6.17} & 4.87 & \textbf{6.88} \\
Location revisit rate & 68.6\% & \textbf{41.3\%} & 68.7\% & \textbf{42.3\%} \\
New-information rate of info.\ actions & 43.5\% & \textbf{67.4\%} & 40.4\% & \textbf{67.1\%} \\
Duplicate-observation rate & 59.3\% & \textbf{37.0\%} & 61.8\% & \textbf{36.6\%} \\
Target object found & 76.6\% & \textbf{87.5\%} & 64.8\% & \textbf{85.9\%} \\
Duplicate-obs.\ rate in failed episodes & 72\% & \textbf{56\%} & 72\% & \textbf{57\%} \\
\midrule
Precondition satisfied (all actions) & 68.0\% & \textbf{82.4\%} & 67.8\% & \textbf{81.9\%} \\
Violations after the first, per episode & 9.87 & \textbf{4.65} & 10.12 & \textbf{5.09} \\
Exact repeats, share of all violations & 62.5\% & \textbf{55.4\%} & 61.7\% & \textbf{54.1\%} \\
Recovery after a violation & 47.2\% & \textbf{59.9\%} & 50.8\% & \textbf{64.2\%} \\
\bottomrule
\end{tabular}
}
\end{table}

\subsubsection{Relation to Theory and Qualitative Example}
\label{sec:supp_theory}

\paragraph{Relation to Proposition~\ref{proposition_coverage}.}
Proposition~\ref{proposition_coverage} bounds the visitation entropy $H(P_L)$ of an episode by its number of duplicate observations $D$. Figure~\ref{fig:entropy_bounds} plots both quantities for all full-length episodes of the controlled agents. Every episode of both agents lies between the two bounds, and across all episodes, entropy decreases almost monotonically with the number of duplicates (rank correlation close to $-1$). The penalized agent produces episodes with clearly fewer duplicates and correspondingly higher visitation entropy, both in failed full-length episodes and on average over all episodes, where the paired confidence interval of the entropy difference excludes zero on both splits. This confirms empirically that minimizing the cumulative penalty, i.e.\ the number of duplicate observations, translates into higher visitation entropy in practice.

\begin{figure}[H]
\centering
\includegraphics[width=\columnwidth]{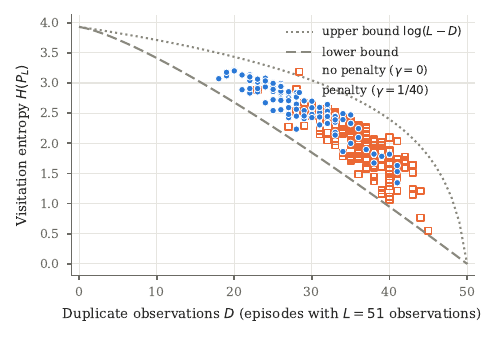}
\caption{Visitation entropy $H(P_L)$ against the number of duplicate observations $D$ for all full-length episodes ($L=51$ observations) of the controlled agents after 100 training steps, seen and unseen splits combined. The dotted and dashed curves are the upper and lower bounds of Proposition~\ref{proposition_coverage}.}
\label{fig:entropy_bounds}
\end{figure}

\paragraph{Qualitative example.}
Figure~\ref{fig:qual_pair} shows the two controlled agents on the same game from the unseen split, taken directly from the evaluation logs. Both agents start identically and pick up the mug. The agent trained without the penalty then repeatedly issues a command for the location it is already at, receives \texttt{"Nothing happens."} over and over, and never cools the mug before the step budget runs out. The penalized agent instead proceeds to the fridge, cools the mug and completes the task in eight steps. This is the looping failure mode of Figure~\ref{fig:intro}, which the penalty removes.

\begin{figure}[H]
\centering
\footnotesize
\setlength{\tabcolsep}{3pt}
\resizebox{\columnwidth}{!}{%
\begin{tabular}{@{}r@{\hspace{4pt}}l@{\hspace{6pt}}l@{}}
\toprule
\multicolumn{3}{@{}l}{\textit{Task: cool some mug and put it in coffeemachine (unseen scene).}} \\
\midrule
\multicolumn{3}{@{}l}{\textbf{Penalty} ($\gamma=\frac{1}{40}$): success in 8 steps} \\
1 & \texttt{go to sinkbasin 1} & arrive; sinkbasin empty \\
2 & \texttt{go to countertop 1} & arrive; see mug 1 \\
3 & \texttt{take mug 1 from countertop 1} & pick up mug 1 \\
4 & \texttt{go to fridge 1} & arrive; fridge closed \\
5--6 & \texttt{cool mug 1 with fridge 1} & cool mug 1 \\
7 & \texttt{go to coffeemachine 1} & arrive \\
8 & \texttt{move mug 1 to coffeemachine 1} & \textbf{task completed} \\
\midrule
\multicolumn{3}{@{}l}{\textbf{No penalty} ($\gamma=0$): failure after 50 steps} \\
1--3 & \multicolumn{2}{@{}l}{same as above (mug 1 picked up)} \\
4 & \texttt{go to sinkbasin 1} & arrive \\
5--7 & \texttt{go to sinkbasin 1} ($\times3$) & \texttt{"Nothing happens."} \\
8--9 & \texttt{go to fridge 1}, \texttt{open fridge 1} & fridge open \\
10 & \texttt{go to sinkbasin 1} & arrive \\
11--21 & \texttt{go to sinkbasin 1} ($\times11$) & \texttt{"Nothing happens."} \\
22--50 & \multicolumn{2}{@{}l}{moves the mug around; 17 more \texttt{"Nothing happens."};} \\
 & \multicolumn{2}{@{}l}{the mug is never cooled; step budget exhausted} \\
\bottomrule
\end{tabular}
}
\caption{The two controlled agents after 100 training steps on the same unseen game (actions and observations from the evaluation logs, abridged). In total, 31 of the 50 steps of the agent without the penalty return \texttt{"Nothing happens."}.}
\label{fig:qual_pair}
\end{figure}

\subsubsection{Additional Qualitative Results}
In Figure~\ref{fig:qualitative_appendix}, we further show some of the example outputs of the LLM agent trained with intrinsic productive exploration rewards framework.

\begin{figure*}[t]
\centering
  \includegraphics[width=\linewidth]{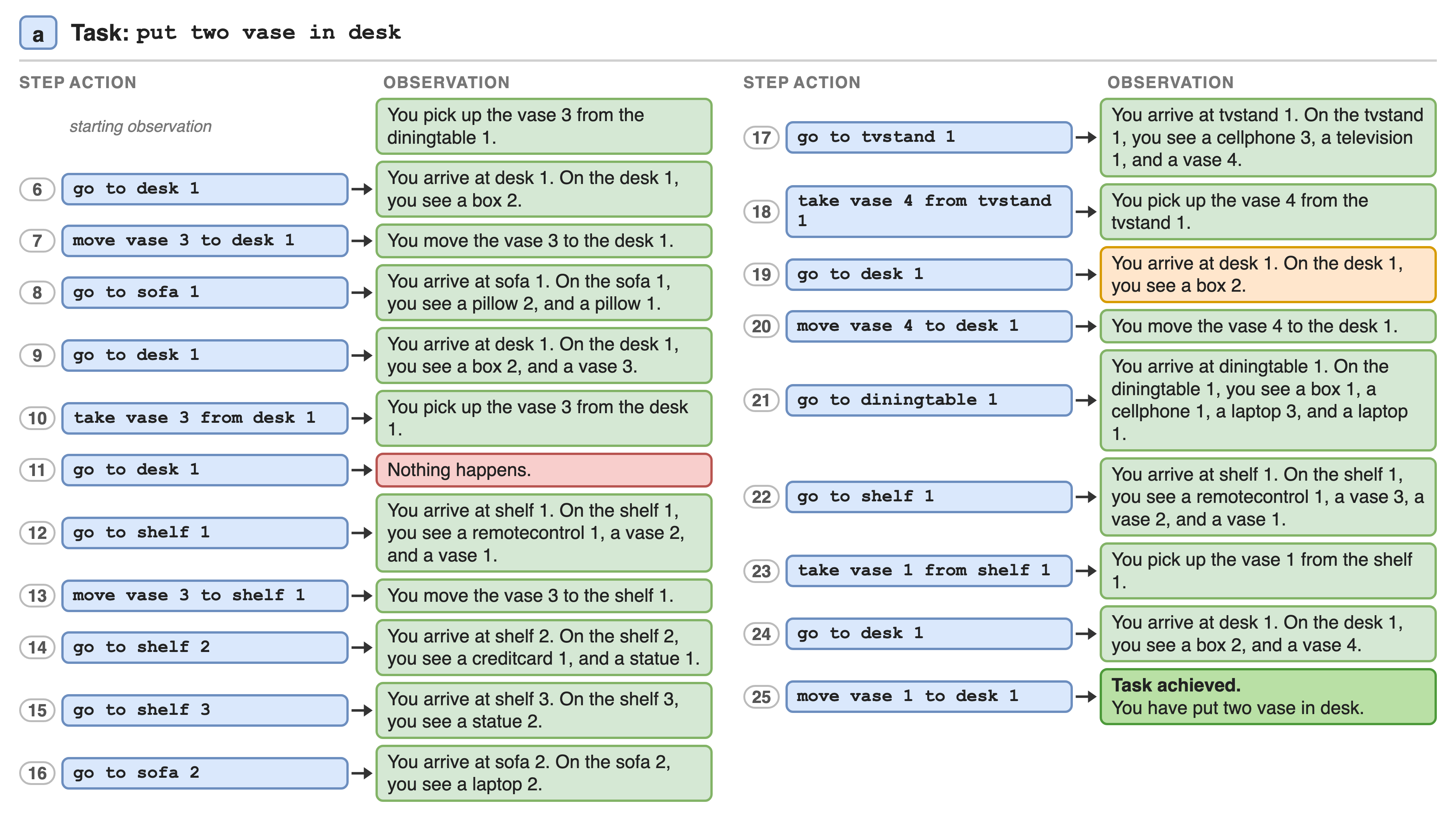}
  \vspace{6pt}
  \includegraphics[width=\linewidth]{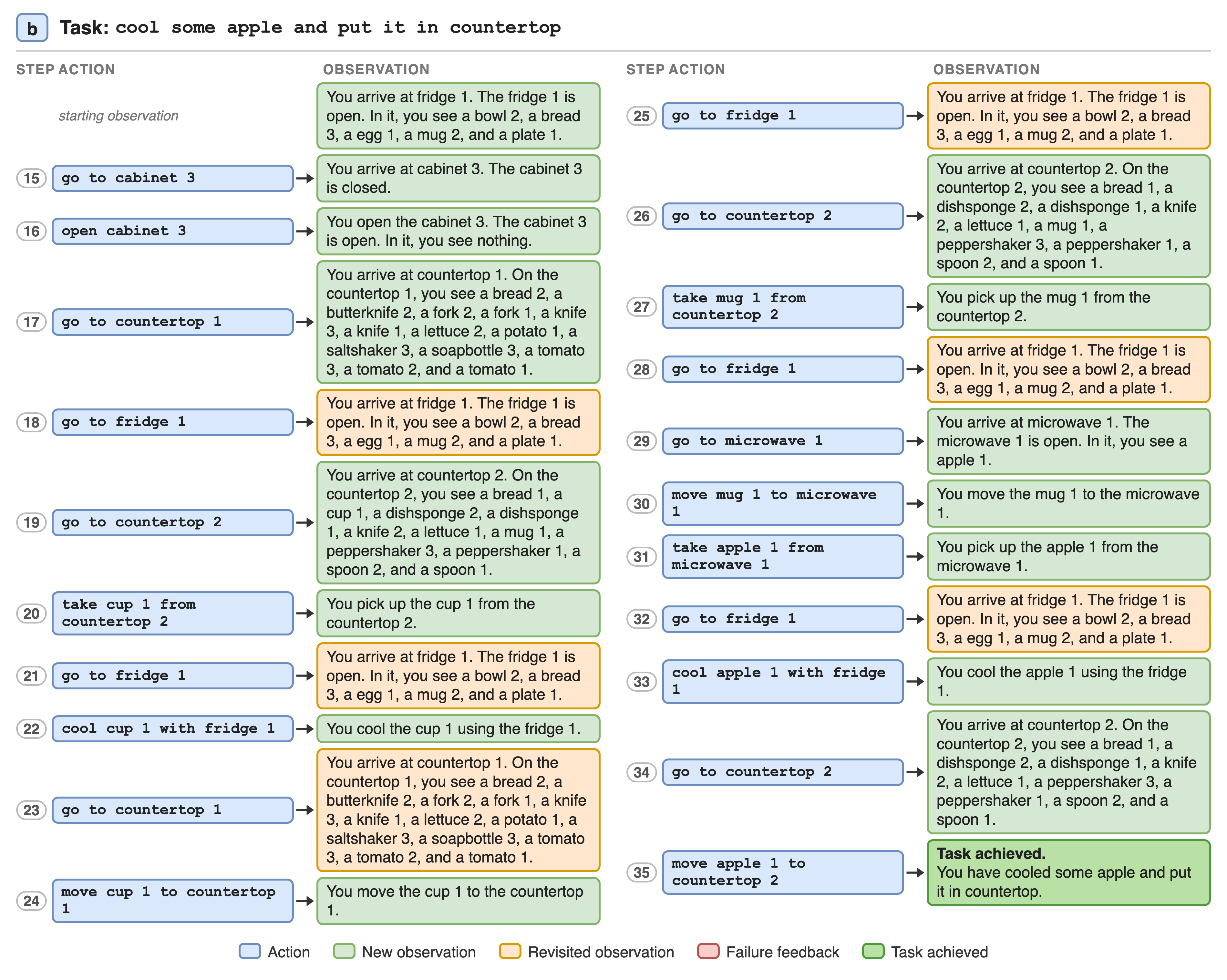}
  \caption {Additional example outputs of the multi-turn interaction of the LLM agent on the \texttt{ALFWorld} dataset after training with conditional state-space exploration with intrinsic rewards.}
  \label{fig:qualitative_appendix}
\end{figure*}

\begin{figure*}[t]
\centering
  \includegraphics[width=\linewidth]{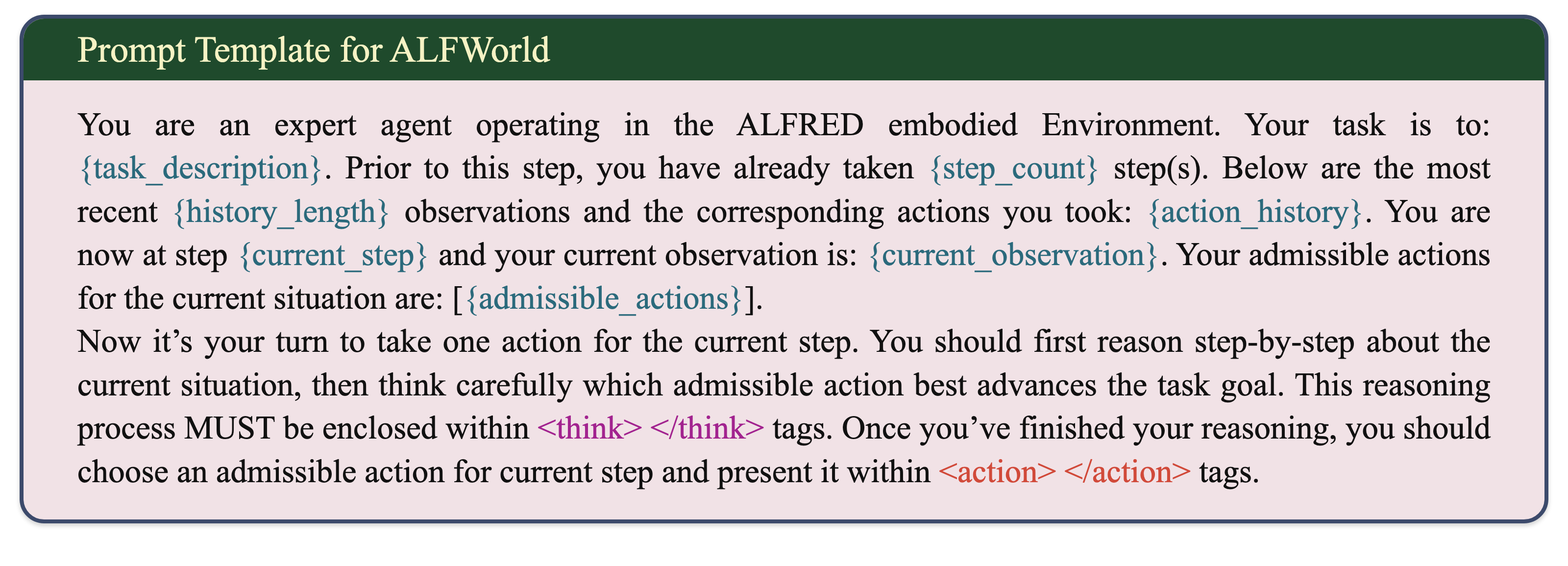}
  \vspace{6pt}
  \includegraphics[width=\linewidth]{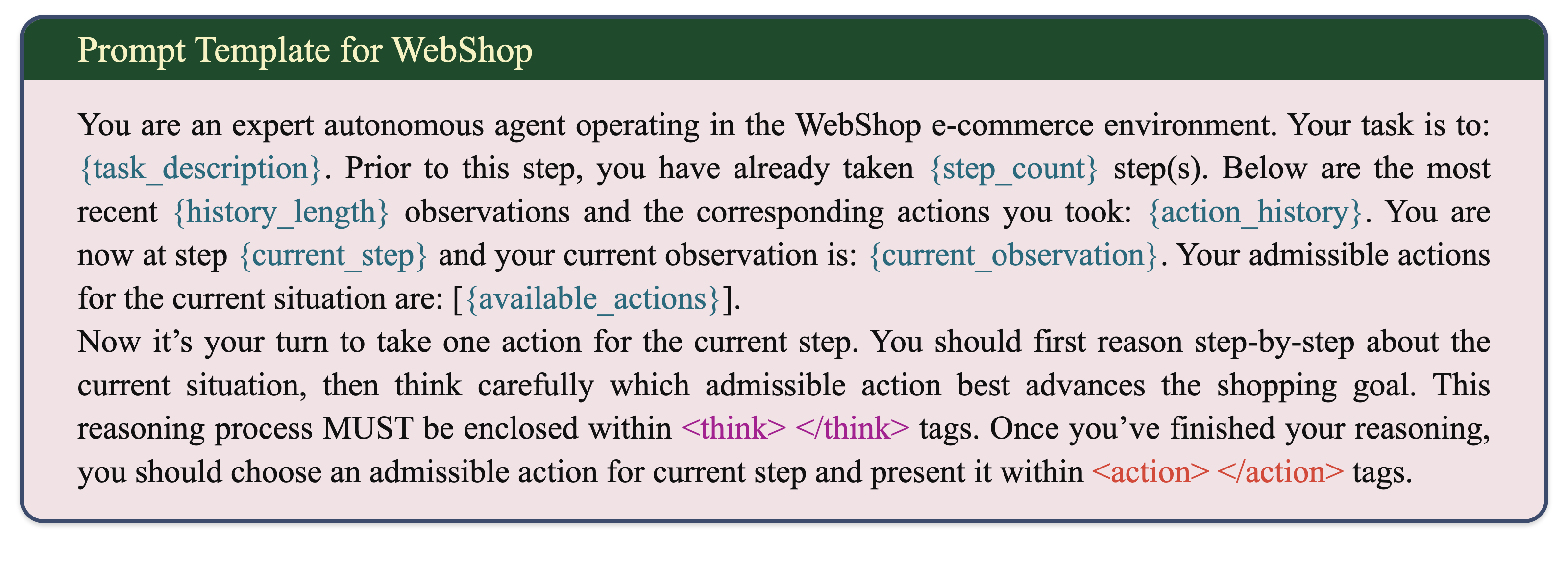}
  \caption {Prompts used for both \texttt{ALFWorld} and \texttt{WebShop} environments.}
  \label{fig:prompts}
\end{figure*}

\subsection{Reproducibility Details}
\label{sec:reproducibility}
\paragraph{Penalty weight $\gamma$ and reward weight $\lambda$.} $\gamma=1/10$ destabilized training; $\gamma=1/100$ matched the baseline; $\gamma=1/40$ (used throughout) gave a clear, consistent improvement.

\paragraph{Prompts.} Shown in Figure~\ref{fig:prompts}.

\paragraph{Invalid actions:} The two environments signal invalid actions differently. In \texttt{ALFWorld}, every action whose preconditions are not satisfied returns the fixed observation \texttt{"Nothing happens."}, and we count a step as invalid when its observation matches this string or when the agent's output contains no well-formed action. \texttt{WebShop} provides no failure message: an action that is not available on the current page, such as clicking an element that is not displayed, is ignored, and the environment returns the unchanged page. For \texttt{WebShop}, we therefore count a step as invalid when the agent's output contains no well-formed action. Because an ignored action returns the same page, it always produces a duplicate observation and is penalized by our intrinsic reward.
 
\paragraph{Rollout and optimization settings.} Batch size, rollouts/step, PPO epochs, clip range, and learning rate are unmodified \emph{verl-agent}/GiGPO defaults~\citep{feng2026group}. Details are mentioned in Table~\ref{tab:hparams}.

\begin{table}[h]
\centering
\caption{Training hyperparameters.}
\label{tab:hparams}
\begin{tabular}{ll}
\toprule
Hyperparameter & Value \\
\midrule
Batch size & 256 \\
Rollouts per step & 128 \\
PPO clip range & 0.2 \\
Learning rate & 1e-6 \\
Training steps & 300 \\
Max steps (\texttt{ALFWorld}) & 50 \\
Max steps (\texttt{WebShop}) & 15 \\
$\gamma$ (intrinsic penalty) & $1/40 \times$ success reward \\
\bottomrule
\end{tabular}
\end{table}

\paragraph{Train/test splits.} We use the standard \texttt{ALFWorld} seen/unseen split released by \citet{shridhar2020alfworld} and the standard \texttt{WebShop} train/eval split released by \citet{yao2022webshop}.

\subsection{Declaration}
We have cited the models and code used in Section~\ref{sec:experiments}. We have used ChatGPT, Gemini, and Claude to assist with polishing writing.

\end{document}